\documentclass{article}
\usepackage{times,dsfont}
\usepackage{amsmath}
\usepackage{amsfonts}
\usepackage{amssymb}
\usepackage{amsthm}
\usepackage{subfigure}
\usepackage{epsfig}
\usepackage{color}
\usepackage{enumerate}
\usepackage{psfrag, graphics}

\usepackage{algorithm}
\usepackage{algorithmic}
\usepackage{natbib}
\usepackage{hyperref}
\usepackage[accepted]{icml2011}

\newtheorem{lemma}{Lemma}
\newtheorem{theorem}{Theorem}
\newtheorem{corollary}{Corollary}
\newtheorem{definition}{Definition}

\newcommand{\dist}{\mathcal{F}}
\newcommand{\X}{\mathcal{X}}

\newcommand{\Xspl}{\mathbf{X}}

\newcommand{\C}{\mathcal{C}}
\newcommand{\G}{\widetilde{G}}

\newcommand{\Sp}{\mathcal{S}}

\newcommand{\real}{\mathbb{R}}

\newcommand{\norm}[1]{\left\|#1\right\|}
\newcommand{\abs}[1]{\left|#1\right|}
\newcommand{\paren}[1]{\left(#1\right)}
\newcommand{\pr}[1]{\mathbb{P}\left(#1\right)}



\newcommand{\braces}[1]{\left\{#1\right\}}

\DeclareMathOperator*{\volume}{vol}

\newcommand{\vol}[1]{\volume\left(#1\right)}

\icmltitlerunning{Pruning nearest neighbor cluster trees}

\begin{document}
\twocolumn[
\icmltitle{Pruning nearest neighbor cluster trees}

\icmlauthor{Samory Kpotufe}{samory@tuebingen.mpg.de}
\icmlauthor{Ulrike von Luxburg}{ulrike.luxburg@tuebingen.mpg.de}
\icmladdress{Max Planck Institute for Intelligent Systems, Tuebingen, Germany}

\icmlkeywords{boring formatting information, machine learning, ICML}
\vskip 0.3in
] 

\begin{abstract}
%
Nearest neighbor ($k$-NN) graphs are widely used in machine learning and data mining applications,
and our aim is to better understand what they reveal about the cluster structure of the unknown underlying 
distribution of points. Moreover, is it possible to identify spurious structures that might arise due to sampling 
variability?

Our first contribution is a statistical analysis that reveals how certain subgraphs of a $k$-NN graph 
form a consistent estimator of the cluster tree of the underlying distribution of points.
Our second and perhaps most important contribution is the following finite sample guarantee. 
We carefully work out the tradeoff between aggressive and conservative pruning and are able 
to guarantee the removal of all spurious cluster structures at all levels of the tree while at the same time guaranteeing 
the recovery of salient clusters. This is the first such finite sample result in the context of clustering.

\end{abstract}

\section{Introduction}
In this work, we consider the nearest neighbor ($k$-NN) graph where each sample point is 
linked to its nearest neighbors. These graphs are widely used in machine learning and data mining applications, and interestingly there is 
still much to understand about their expressiveness. In particular we would like to 
better understand what such a graph on a finite sample of points might reveal about the cluster structure of 
the underlying distribution of points. More importantly 
we are interested in whether one can identify spurious structures that are artifacts of sampling variability, i.e. 
spurious structures that are not representative of the true cluster structure of the distribution.
\begin{figure}
 \centering
\includegraphics[width=4.5cm]{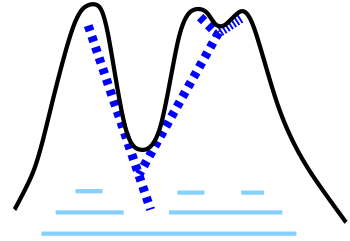}
\caption{A density $f$ (black line) and its cluster tree (dashed). The CCs of 3 level sets are shown in lighter color at the bottom.}
\label{fig:cluster tree}
\end{figure}

Our first contribution is in exposing more of the richness of $k$-NN graphs. 
Let $G_n$ be a $k$-NN graph over an $n$-sample from a distribution $\dist$ with density $f$. 
Previous work \cite{MHL105} has shown that the connected components (CC) of a given level set of $f$ 
can be approximated by the CCs of some subgraph of $G_n$, provided the level set satisfies certain 
boundary conditions. However it remained unclear whether or when all level sets of $f$ might satisfy these 
conditions, in other words, whether the CCs of any level set can be recovered. We 
show under mild assumptions on $f$ that CCs of any level set can be recovered by subgraphs of $G_n$ 
for $n$ sufficiently large. Interestingly, these subgraphs are obtained in a rather simple way: just 
remove points from the graph in decreasing order of their $k$-NN radius (distance to the $k$'th 
nearest neighbor), and we obtain a nested hierarchy of subgraphs which approximates the \emph{cluster tree} of $\dist$, 
i.e. the nested hierarchy formed by the level sets of $f$ (see Figure \ref{fig:cluster tree}, also Section \ref{sec:clusterTree}). 

Our second, and perhaps more important contribution is in providing the first concrete approach in the context of clustering
 that guarantees the pruning of all spurious cluster structures at any tree level. We carefully work out the tradeoff 
between pruning ``aggressively'' (and potentially removing important clusters) and pruning ``conservatively'' 
(with the risk of keeping spurious clusters) and derive tuning settings that require no knowledge of the underlying 
distribution beyond an upper bound on $f$. We can thus guarantee in a finite sample setting that 
(a) all clusters remaining at any level of the pruned tree correspond to CCs of some level set of $f$, 
i.e. all spurious clusters are pruned away, and (b) salient clusters are still discovered, where 
the degree of \emph{saliency} depends on the sample size $n$. We can show furthermore that
 the pruned tree remains a consistent estimator of the underlying cluster tree, i.e. the CCs of 
any level set of $f$ are recovered for sufficiently large $n$. 
Interestingly, the pruning procedure is not tied to the $k$-NN method, but is based on a simple intuition that can be applied 
to other cluster tree methods (see Section \ref{sec:alg}).

Our results rely on a central ``connectedness'' lemma (Section \ref{sec:connectedness}) that identifies which CCs of $f$ 
remain connected in the empirical tree. This is done by analizing the way in which $k$-NN radii vary along a path in a dense region.


\subsection{Related work}
Recovering the cluster tree of the underlying density is a clean formalism of 
hierarchical clustering proposed in 1981 by J. A. Hartigan \cite{H106}. 
Hartigan showed in the same seminal paper that the single-linkage algorithm is a consistent estimator 
of the cluster tree for densities on $\real$. For $\real^d, d>1$ it is known that the empirical cluster tree of 
a consistent density estimate is a consistent estimator of the underlying cluster tree (see e.g. \cite{WT108}), unfortunately
 there is no known algorithm for computing this empirical tree. Nonetheless, the idea has led to the development of 
interesting heuristics based on first estimating density, then approximating the cluster tree of the density estimate 
in high dimension \cite{WT108, SN109}.

Many other related work such as \cite{RV110, SSN111, MHL105, RW112} 
consider the task of recovering the CCs of a single level set, the closest to the present work being \cite{MHL105}
which uses a $k$-NN graph for level set estimation. As previously discussed, level set estimation however never led 
to a consistent estimator of the cluster tree, since these results typically impose technical requirements on the level set being 
recovered but do not work out how or when these requirements might be satisfied by all level sets of a distribution.

A recent insightful paper of \citet{CD101} presents the first provably consistent algorithm 
for estimating the cluster tree. At each level of the empirical cluster tree, 
they retain only those samples whose $k$-NN radii are below a scale parameter $r$ which indexes the level;  
CCs at this level are then discovered by building an $r$-neighborhood graph on the retained samples.
This is similar to an earlier generalization of single-linkage by \citet{W:57} which however was given without a convergence analysis.
The $k$-NN tree studied here differs in that, at an equivalent level $r$, points are connected to the subset of their $k$-nearest neighbors retained at that level. 
One practical appeal of our method is its simplicity: we need only remove points from an initial $k$-NN graph to obtain the various levels of the empirical cluster tree.

\cite{CD101} provides finite sample results for a particular setting of $k\approx \log n$. In contrast our finite sample results 
are given for a wide range of values of $k$, namely for $\log n \lesssim k \lesssim n^{1/O(d)}$. In both cases the 
finite sample results establish natural separation conditions under which the CCs of level sets are recovered 
(see Theorem \ref{theo:main}). The result of \cite{CD101} however allows the possibility that 
some empirical clusters are just artifacts of sampling variability. We provide a simple pruning procedure 
that ensures that clusters discovered empirically at any level correspond to true clusters at some level or the underlying
cluster tree. Note that this can be trivially guaranteed by returning a single cluster at all levels, so we additionally 
guarantee that the algorithm discovers salient modes of the density, where the saliency depends on empirical 
quantities (see Theorem \ref{theo:pruning}).

A recent archived paper \cite{RSNW107} also treats the problem of false clusters in cluster tree estimation, but 
the result is not algorithmic as they only consider the cluster tree of an empirical density estimate, and do not provide a way to compute this cluster tree.

There exist many pruning heuristics in the literature which typically consist
of removing \emph{small} clusters \cite{MHL105, SN109} using some form of thresholding. 
The difficulty with these approaches is in how to define \emph{small} without 
making strong assumptions on the unknown underlying distribution, or on the tree level 
being pruned (levels correspond to different resolutions or cluster sizes).
Moreover, even the assumption that spurious clusters must be small does not necessarily hold.
Consider for example a cluster made up of two large regions 
connected by a thin bridge of low mass; the two large regions can easily appear as two separate clusters in a finite sample.
Some more sophisticated methods such as \cite{SN113} do not rely on cluster size for pruning, instead 
 they return confidence values for the empirical clusters based on 
various notions of cluster stability; unfortunately they do not provide finite sample guarantees.
Our pruning guarantees the removal of all spurious clusters, large and small (see Figure \ref{fig:pruning}); we make no assumption on the shape of clusters 
beyond a smoothness assumption on the density; we provide a simple tuning parameter whose setting requires 
just an upper bound on the density.


\section{Preliminaries}
Assume the finite dataset $\Xspl= \braces{X_i}_{i =1}^n$ is drawn i.i.d. from a distribution $\dist$ over $\real^d$ with density function $f$.

We start with some simple definitions related to $k$-NN operations. All balls, unless otherwise specified, denote closed balls in $\real^d$.
\begin{definition}[$k$-NN radii]
 For $x\in \X$, let $r_{k,n}(x)$ denote the radius of the smallest ball centered at $x$ containing $k$ points from
$\Xspl\setminus\{x\}$. Also, let $r_k(x)$ denote the radius of the smallest ball centered at $x$ of $\dist$-mass
$k/n$.
\end{definition}

\begin{definition}[$k$-NN and mutual $k$-NN graphs]
\label{def:knn}
The $k$-NN graph is that whose vertices are the points in $\Xspl$, and where $X_i$ is connected to $X_j$ iff $X_i \in B(X_j, \theta
r_k(X_j))$ or   $X_j \in B(X_i, \theta  r_k(X_i))$ for some $\theta > 0$. The mutual $k$-NN graph is that where $X_i$ is connected to
$X_j$ iff $X_i \in B(X_j, \theta  r_k(X_j))$ and $X_j \in B(X_i, \theta  r_k(X_i))$.
\end{definition}

\subsection{Cluster tree}
\label{sec:clusterTree}
\begin{definition}[Connectedness]
 We say $A\subset \real^d$ is connected if for every $x, x' \in A$ there exists a continuous $1-1$ function $P:[0, 1]\mapsto A$ where
$P(0) = x$ and $P(1) = x'$. $P$ is called a path in $A$ between $x$ and $x'$. 
\end{definition}

The cluster tree of $f$ will be denoted $\braces{G(\lambda)}_{\lambda>0}$, where $G(\lambda)$ are the CCs of 
the level set $\braces{x: f(x) \geq \lambda}$. Notice that $\braces{G(\lambda)}_{\lambda>0}$ forms a (infinite) tree hierarchy where for any
two components $A, A'$, either $A\cap A' = \emptyset$ or one is a descendant of the other, i.e $A\subset A'$ or $A'\subset A$.

\section{Algorithm}
\label{sec:alg}
\begin{definition}[$k$-NN density estimate]
Define the density estimate at $x\in \real^d$ as :
$$f_n(x)\doteq \frac{k}{n\cdot \vol{B(x, r_{k,n}(x))}} = \frac{k}{n\cdot  v_d  r_{k,n}^d(x)},$$
where $v_d$ is the volume of the unit ball in $\real^d$. 
\end{definition}

Let $G_n$ be the $k$-NN or mutual $k$-NN graph.  For $\lambda>0$ define
$G_n(\lambda)$ as the subgraph of $G_n$ containing only vertices in $\braces{X_i: f_n(X_i) \geq \lambda}$ 
and corresponding edges. The CCs of $\braces{G_n(\lambda)}_{\lambda>0}$ form a tree: 
let $A_n$ and $A_n'$ be two such CCs, either  $A_n\cap A_n' = \emptyset$ or one is a descendant of the 
other, i.e. $A_n$ is a subgraph of $A_n'$ or vice versa. 
To simplify notation, we let the set $\braces{G_n(\lambda)}_{\lambda>0}$ denote the empirical cluster tree 
before pruning. 

\subsection*{Pruning}
%
%
The pruning procedure (Algorithm \ref{alg:pruning}) consists of simple lookups: it reconnects CCs at level $\lambda$ if they are part of the same 
CC at level $\lambda - \tilde{\epsilon}$ where the tuning parameter $\tilde{\epsilon}\geq 0$ controls how aggressively we prune. 
We show its behavior on a finite sample in Figure \ref{fig:pruning}.

The intuition behind the procedure is the following. Suppose $A_n, A'_n\subset \Xspl$ are disconnected 
at some level $\lambda$ in the empirical tree before pruning. However, they ought to be connected, i.e. 
their vertices belong to the same CC $A$ 
at the highest level where they are all contained in the underlying cluster tree. Then, 
key sample points from $A$ that would have kept them connected are missing at level $\lambda$ in the empirical tree. 
These key points have $f_n$ values lower than $\lambda$, but probably not much lower. 
By looking down to a lower level near $\lambda$ 
we find that $A_n, A_n'$ are connected and thus detect the situation.
Notice that this intuition is not tied to the $k$-NN cluster tree but can be applied to any other cluster tree procedure. All 
that is required is that all points from $A$ (as discussed above) be connected at some level in the tree close to $\lambda$.

\begin{algorithm}[h]
\begin{small}
   \caption{Prune $G_n(\lambda)$}
   \label{alg:pruning}
\begin{algorithmic}
\STATE Given: tuning parameter $\tilde{\epsilon}\geq 0$, same for all levels.
\STATE$\G_n(\lambda) \leftarrow G_n(\lambda)$.
\IF {$\lambda > \tilde{\epsilon}$}
     \STATE Connect components $A_n, A_n'$ of $\G_n(\lambda)$ if they are part of the same component 
of $G_n(\lambda -\tilde{\epsilon})$.
\ELSE 
\STATE Connect all $\G_n(\lambda)$.
\ENDIF
\end{algorithmic}
\end{small}
\end{algorithm}
It is not hard to see that the CCs of the pruned subgraphs $\braces{\G_n(\lambda)}_{\lambda>0}$ still form a tree.
We will hence denote the pruned empirical tree by $\braces{\G_n(\lambda)}_{\lambda>0}$.

\begin{figure}
 \centering
\includegraphics[height = 4cm, width = 4.2cm]{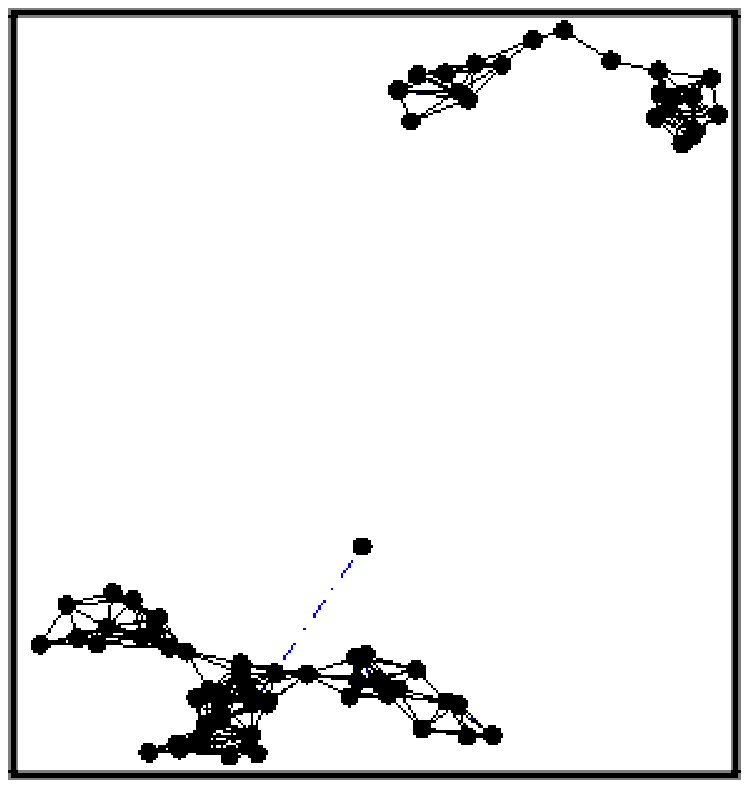}
\hspace*{-1.0mm}
\includegraphics[height = 4cm, width = 4.cm]{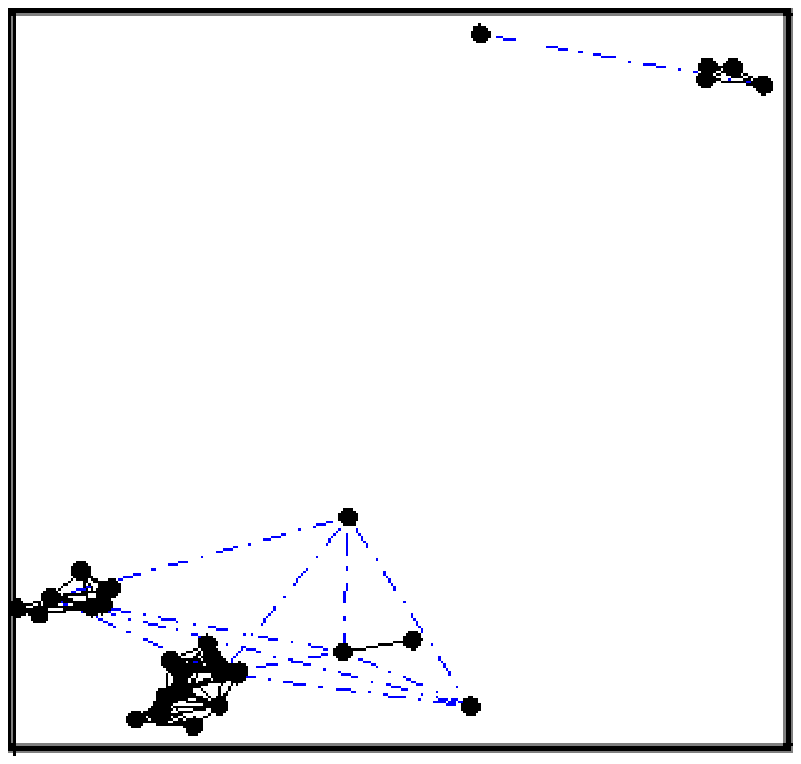}
\caption{Pruning at work: it reconnects CCs independent of size. The dashed lines are reconnection edges from pruning. 
Shown are two levels of the $k$-NN tree of a 500-sample from the 2-modes mixture 
$0.5\mathcal{N}([0 ,0], I_2) + 0.5 \mathcal{N}([1, 4], I_2)$. 
Here $k=12$, $\theta =1$, $\tilde{\epsilon}=F/\sqrt{k}$ where $F = 2.73$ is the maximum $f_n$ value.
From left to right, level $\lambda=0.9$ has 72 points, and level $\lambda=1.3$ has 33.}
\label{fig:pruning}
\end{figure}

\section{Results Overview}
We make the following assumptions on the density $f$.
\begin{enumerate}
 \item [(A.1)] $\exists F > 0$,  $\sup_{x\in \real^d}f(x) \leq F$.
\item  [(A.2)] $f$ is Hoelder-continuous, i.e. there exists $L, \alpha>0$ such that for all $x,
x'\in \real^d$, $$\abs{f(x) - f(x')}\leq L \norm{x-x'}^\alpha.$$ 
\end{enumerate}

Theorem \ref{theo:main} below is a finite sample result that establishes conditions under which samples from a connected subset of $\real^d$
remain connected in the empirical cluster tree, and samples from two disconnected subsets of $\real^d$ remain disconnected even after
pruning. Essentially, for $k$ sufficiently large, points from connected subsets $A$ remain connected below some level. 
Also, provided $k$ is not too large, disjoint subsets $A$ and $A'$ which are separated by a large enough region 
of low density (relative to $n$, $k$ and $\tilde{\epsilon}$),  remain disconnected above some level.

We require the following two definitions.
\begin{definition}[Envelope of $A\subset \real^d$]
\label{def:envelope}
Let $A\subset \real^d$ and for $r>0$, define: $A_{+r} \doteq \braces{y: \exists x\in A, y \in B(x, r)}.$
\end{definition}

\begin{definition}[$(\epsilon, r)$-separated sets ]
\label{def:separation}
 $A, A'\subset \real^d$ are $(\epsilon, r)$-separated if there exists a separating set $S$ such that every path in $\real^d$
 between $A$ and
$A'$ intersects $S$, and 
$$\sup_{x\in S_{+r}} f(x) \leq \inf_{x\in A\cup A'} f(x) - \epsilon.$$ 
\end{definition}

\begin{theorem}
 Suppose $f$ satisfies (A.1) and (A.2). Let $G_n$ be the $k$-NN or mutual $k$-NN graph.  Let $\delta>0$ and  
define $\epsilon_k \doteq11F\sqrt{\ln(2n/\delta)/k}$. 
There exist $C$ and $C' = C'(\dist)$ such that, for
\begin{align}
 &C  \paren{\max\braces{1, {\sqrt{2}}/{\theta}}}^d d \ln (n/\delta) \nonumber\\
&\leq k \leq  C'  \paren{F\sqrt{\ln (n/\delta)}}^{{2(\alpha +
d)}/{(3\alpha +d)}} n^{{2\alpha}/{(3\alpha + d)}} \label{eq:setting_of_k}
\end{align}
the following holds with probability at least $1- 3\delta$ simultaneously for subsets $A$ of $\real^d$. 
\begin{enumerate}[(a)]
 \item Let $A$ be a connected subset of $\real^d$, and let $\lambda \doteq \inf_{x\in A} f(x) > 2 \epsilon_k$. All points in $A\cap \Xspl$
belong to 
the same CC of $\G_n(\lambda - 2\epsilon_k)$.
\item Let $A$ and $A'$ be two disjoints subsets of $\real^d$, and define $\lambda = \inf_{x\in A\cup A'}f(x)$. 
Recall that $\tilde{\epsilon}\geq 0$ is the tuning parameter.
Suppose $A$ and $A'$ are $(\epsilon, r)$-separated for $\epsilon = 6\epsilon_k + 2\tilde{\epsilon}$ and 
$r=\frac{\theta}{2}\paren{{4k}/{v_d n \lambda}}^{1/d}$. Then $A\cap \Xspl$ and $A'\cap \Xspl$ are disconnected in $\G_n(\lambda - 2\epsilon_k)$. 
\end{enumerate}

\label{theo:main}
\end{theorem}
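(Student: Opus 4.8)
The plan is to do everything on a single high-probability ``good event'' and then argue parts (a) and (b) deterministically. The first ingredient is a \emph{basic uniform deviation lemma}, obtained from the standard relative Vapnik--Chervonenkis inequality for the class of closed Euclidean balls (VC dimension $O(d)$): on an event of probability at least $1-3\delta$ we have (i) $\abs{f_n(x)-f(x)}\leq\epsilon_k$ for every $x$ at which $f(x)$ is not much below $\epsilon_k$ --- so that $f_n(x)\geq\mu$ with $\mu>2\epsilon_k$ already forces $f(x)\geq\mu-\epsilon_k$, and $r_{k,n}(x)\asymp r_k(x)$ at such $x$ --- and (ii) every ball $B$ with $\dist(B)$ above the threshold $\asymp d\ln(n/\delta)/n$ contains a constant fraction of $n\,\dist(B)$ sample points. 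The range \eqref{eq:setting_of_k} is exactly what makes this lemma available: the lower bound on $k$ pushes the ball-mass scale $\asymp(\theta/c)^d k/n$ at which points will be linked above the VC threshold (this is the source of the $\paren{\max\braces{1,\sqrt2/\theta}}^d d$ factor), and the upper bound on $k$ keeps the Hölder bias $Lr_k^\alpha$ at the $k$-NN scale below $\epsilon_k$ even at the smallest level that matters ($\asymp\epsilon_k\asymp F\sqrt{\ln(n/\delta)/k}$); balancing $L(k/n)^{\alpha/d}$ against $F\sqrt{\ln(n/\delta)/k}$ at that level yields the exponents $2\alpha/(3\alpha+d)$ and $2(\alpha+d)/(3\alpha+d)$, with $L,\alpha$ absorbed into $C'(\dist)$. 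From here the argument is deterministic.

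For (a): each $x\in A\cap\Xspl$ has $f(x)\geq\lambda$, hence $f_n(x)\geq\lambda-\epsilon_k>\lambda-2\epsilon_k$, so $A\cap\Xspl$ lies in $G_n(\lambda-2\epsilon_k)$ and a fortiori in $\G_n(\lambda-2\epsilon_k)$. Given $x,x'\in A\cap\Xspl$ and a path $P\subset A$ between them, I would cover $P$ by balls $B(y_t,\rho_t)$, finely spaced along $P$, whose radii are a small fixed fraction of $\theta$ times the local $k$-NN radius, $\rho_t\asymp\theta r_k(y_t)$. Since $f\geq\lambda>2\epsilon_k$ on $P$, each such ball has $\dist$-mass a constant fraction of $k/n$, hence contains a sample point $z_t$ (choose $z_0=x$, $z_m=x'$) by part (ii) of the basic lemma. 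Since $r_k$ varies slowly ($f$ is Hölder), consecutive balls are close and $\norm{z_t-z_{t+1}}\lesssim\theta r_k(y_t)$, which is $\leq\theta\,(r_{k,n}(z_t)\wedge r_{k,n}(z_{t+1}))$ for the chosen constants, so $z_t$ and $z_{t+1}$ are linked in $G_n$ (and in the mutual $k$-NN graph). Finally $f(z_t)\geq f(y_t)-L\rho_t^\alpha\geq\lambda-\epsilon_k$, so $f_n(z_t)\geq\lambda-2\epsilon_k$ and every $z_t$ is a vertex of $G_n(\lambda-2\epsilon_k)$; thus $z_0,\dots,z_m$ is a path there, and since pruning only adds edges, $x$ and $x'$ lie in the same CC of $\G_n(\lambda-2\epsilon_k)$. (The two copies of $\epsilon_k$ in the level $\lambda-2\epsilon_k$ account, respectively, for the density-estimation error and for the Hölder bias at the linking scale.)

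For (b), write $\mu\doteq\lambda-2\epsilon_k-\tilde{\epsilon}$; the hypothesis $\lambda\geq\epsilon=6\epsilon_k+2\tilde{\epsilon}$ gives $\mu\geq\lambda/2$ (and $\lambda-2\epsilon_k>\tilde{\epsilon}$, so the ``if'' branch of Algorithm~\ref{alg:pruning} is the one that acts). The deterministic core is the claim that \emph{no edge of $G_n$ between two vertices of $G_n(\mu)$ has its segment meeting $S$}. Indeed, any vertex $X$ of $G_n(\mu)$ has $f(X)\geq f_n(X)-\epsilon_k\geq\lambda-3\epsilon_k-\tilde{\epsilon}>\lambda-\epsilon\geq\sup_{x\in S_{+r}}f(x)$, so $X\notin S_{+r}$, i.e.\ $X$ is farther than $r$ from $S$. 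Hence if an edge $(X_i,X_j)$ crossed $S$ at $s\in S$, then $\norm{X_i-X_j}=\norm{X_i-s}+\norm{s-X_j}>2r=\theta\paren{4k/(v_d n\lambda)}^{1/d}$, whereas the linking condition forces $\norm{X_i-X_j}\leq\theta\max\braces{r_{k,n}(X_i),r_{k,n}(X_j)}$; so some endpoint, say $X_i$, satisfies $r_{k,n}(X_i)>\paren{4k/(v_d n\lambda)}^{1/d}$, i.e.\ $f_n(X_i)<\lambda/4<\lambda/2\leq\mu$, contradicting $X_i\in G_n(\mu)$. (This is precisely why $r$ is calibrated to $\frac{\theta}{2}\paren{4k/(v_d n\lambda)}^{1/d}$.) To conclude, unfold the pruning: a path in $\G_n(\lambda-2\epsilon_k)$ from $A\cap\Xspl$ to $A'\cap\Xspl$ is made of original edges of $G_n(\lambda-2\epsilon_k)\subseteq G_n(\mu)$ and of reconnection edges, and each reconnection edge can, by the very rule that created it, be replaced by a genuine path in $G_n(\lambda-2\epsilon_k-\tilde{\epsilon})=G_n(\mu)$; concatenating gives a polygonal curve in $G_n(\mu)$ from a point of $A$ to a point of $A'$. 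This is a path between $A$ and $A'$ in $\real^d$, so by $(\epsilon,r)$-separation it meets $S$, whence one of its edges --- an edge of $G_n(\mu)$ --- has its segment meeting $S$, contradicting the claim. Therefore $A\cap\Xspl$ and $A'\cap\Xspl$ are disconnected in $\G_n(\lambda-2\epsilon_k)$.

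The heavy lifting is the basic uniform deviation lemma and the bookkeeping that pins down $\epsilon_k$, $r$, and the range \eqref{eq:setting_of_k} so that all inequalities close at once; I expect this VC/Hölder calibration, not the geometry, to be where most of the work lies. The most delicate conceptual point is the treatment of pruning in (b): one must realise that ``disconnected in the pruned tree at level $\lambda-2\epsilon_k$'' follows from ``disconnected in the \emph{un}pruned $G_n$ at the lower level $\lambda-2\epsilon_k-\tilde{\epsilon}$'', which is exactly why the required separation $\epsilon$ carries the extra $2\tilde{\epsilon}$; once that reduction is in place the two geometric steps --- a path covering on the ``connected'' side and, on the ``separated'' side, an edge that is simply too short to span a width-$2r$ gap containing no high-density point --- are short.
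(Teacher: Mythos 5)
Your proposal is correct and tracks the paper's own proof closely: the same three ingredients (uniform deviation of $f_n$, uniform $r_{k,n}\asymp r_k$, and a VC lemma ensuring high-mass balls contain samples) give a $1-3\delta$ good event, part (b) is the same ``short edges cannot span a width-$2r$ gap devoid of high-density vertices'' argument combined with the observation that disconnection in $\G_n(\lambda-2\epsilon_k)$ reduces to disconnection in $G_n(\lambda-2\epsilon_k-\tilde\epsilon)$, and the calibration of $r$, $\epsilon$ and the range of $k$ is done exactly as in the paper's Lemmas on separation and connectedness. The one place where you diverge slightly in \emph{form} is part (a): you propose a deterministic covering of the path $P$ by balls of \emph{population} radius $\rho_t\asymp\theta r_k(y_t)$, whereas the paper builds the witness chain inductively and data-dependently, centring half-balls of radius $\asymp\tau r_{k,n}(x_i)$ at the sample point reached so far and using a dot-product condition to control $\norm{x_i-x_{i+1}}$ (which is the source of its $\sqrt2/\theta$ rather than your triangle-inequality constant). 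Both versions carry the same key idea --- adapt the covering scale to the local $k$-NN radius and let H\"older continuity control its variation, exactly as the paper flags in its remark about the ``multiscale effect'' --- and both close with the same VC threshold and the same H\"older-vs-$\epsilon_k$ balance yielding the $n^{2\alpha/(3\alpha+d)}$ exponent. Your covering version is arguably cleaner (it sidesteps the paper's termination/progress bookkeeping for the $y_i$'s, since compactness of $P$ and $\inf_P r_k>0$ make the covering finite upfront), at the cost of slightly worse universal constants; neither affects the statement. So: correct, and essentially the paper's argument with a mild reformulation of the connectedness step.
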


Theorem \ref{theo:main} above, although written in terms of $\G_n$, applies also to $G_n$ by just setting $\tilde{\epsilon} = 0$.
The theorem implies consistency of both pruned and unpruned $k$-NN trees under mild additional conditions. 
Some such conditions are illustrated in the corollary below. A nice practical aspect of the 
pruning procedure is that consistency is obtained for a wide range of settings of $\tilde{\epsilon}$ and $k$ as
functions of $n$.

\begin{corollary}[Consistency]
Suppose that $f$ satisfies (A.1) and (A.2) and that, in addition, $\dist$ is supported on a compact set, and for any $\lambda>0$, there are
finitely many components 
in $G(\lambda)$. Assume that, as $n \rightarrow \infty$, $\tilde{\epsilon} = \tilde{\epsilon}(n) \rightarrow 0$ and $k / \log n \rightarrow
0$ while $k=k(n)$ satisfies (\ref{eq:setting_of_k}).

For any $A\subset \real^d$, let $A_n$ denote the smallest component of 
$\braces{\G_n(\lambda)}_{\lambda>0}$ containing $A \cap \Xspl$. 
Fix $\lambda >0$. We have 
$\lim_{n \rightarrow \infty}\pr{\forall A, A' \in G(\lambda), \, A_n \text{ is disjoint from } A_n' } = 1$.
\label{cor:consistency}
\end{corollary}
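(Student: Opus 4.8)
The plan is to deduce the corollary from Theorem~\ref{theo:main}, invoked once with a slowly vanishing confidence parameter $\delta=\delta_n$, together with an elementary topological fact about the level sets of $f$. Fix $\lambda>0$ and let $A^{(1)},\dots,A^{(m)}$ be the components of $G(\lambda)$, finitely many by assumption; since $\dist$ has compact support and $f$ is continuous, $\{x:f(x)\ge\lambda\}$ is compact, so each $A^{(i)}$ is a compact connected set and distinct $A^{(i)}$ are disjoint. Write $\mu_i=\inf_{A^{(i)}}f\ge\lambda$ and $\mu_{ij}=\min(\mu_i,\mu_j)=\inf_{A^{(i)}\cup A^{(j)}}f$. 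Take $\delta_n\to0$ with $\ln(n/\delta_n)=\Theta(\log n)$, e.g.\ $\delta_n=1/n$. The hypotheses on $k=k(n)$ then ensure that (\ref{eq:setting_of_k}) holds for this $\delta_n$, that $k/\log n\to\infty$ forces $\epsilon_k=11F\sqrt{\ln(2n/\delta_n)/k}\to0$, that $\tilde{\epsilon}\to0$, and that the upper bound in (\ref{eq:setting_of_k}) forces $k/n\to0$, hence $r_{ij}\doteq\tfrac{\theta}{2}\bigl(4k/(v_d n\,\mu_{ij})\bigr)^{1/d}\to0$. We work on the event $\mathcal E_n$, of probability $\ge 1-3\delta_n$, on which the conclusion of Theorem~\ref{theo:main} holds simultaneously for all subsets of $\real^d$.

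Next, the topological step: for each pair $i\ne j$ there is $\eta_{ij}>0$ such that for every $\eta\in(0,\eta_{ij})$, $A^{(i)}$ and $A^{(j)}$ lie in distinct components of $\{f\ge\lambda-\eta\}$. Indeed, let $B_\eta$ be the component of the compact set $\{f\ge\lambda-\eta\}$ containing $A^{(i)}$; these are nested and compact, so $\bigcap_{\eta>0}B_\eta$ is compact, connected, contained in $\{f\ge\lambda\}$, and contains $A^{(i)}$, hence equals $A^{(i)}$ by maximality of components. Since $A^{(j)}$ is connected, disjoint from $A^{(i)}$, and a subset of $\{f\ge\lambda-\eta\}$, it is either inside $B_\eta$ or disjoint from it, and it cannot lie in every $B_\eta$, so it is disjoint from $B_\eta$ for all small $\eta$. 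Thus every path between $A^{(i)}$ and $A^{(j)}$ leaves $\{f\ge\lambda-\eta\}$, so $S_{ij}\doteq\{x:f(x)\le\lambda-\eta\}$ is a separating set, and by Hölder continuity (A.2), $\sup_{x\in (S_{ij})_{+r}}f\le\lambda-\eta+Lr^\alpha$ for any $r>0$. Fix $\eta=\eta_{ij}/2$. Once $n$ is large enough that $6\epsilon_k+2\tilde{\epsilon}+L r_{ij}^\alpha<\eta_{ij}/2$ for all pairs, we get $\sup_{x\in (S_{ij})_{+r_{ij}}}f\le\lambda-(6\epsilon_k+2\tilde{\epsilon})\le\mu_{ij}-(6\epsilon_k+2\tilde{\epsilon})$, i.e.\ $A^{(i)}$ and $A^{(j)}$ are $(6\epsilon_k+2\tilde{\epsilon},\,r_{ij})$-separated, which is precisely the hypothesis of Theorem~\ref{theo:main}(b) for this pair (whose relevant level is $\mu_{ij}$).

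Now fix $n$ large enough that, in addition, $\mu_i>2\epsilon_k$ for all $i$ (possible since each $\mu_i\ge\lambda>0$ and $\epsilon_k\to0$); all these thresholds depend only on $\lambda$ and the finitely many constants $\eta_{ij},\mu_i$. On $\mathcal E_n$, Theorem~\ref{theo:main}(a) applied to the connected set $A^{(i)}$ places all of $A^{(i)}\cap\Xspl$ in one component $D_i$ of $\G_n(\mu_i-2\epsilon_k)$, and since components at higher thresholds are subgraphs of those at lower thresholds, $D_i$ is contained in the component $C_i$ of $\G_n(\mu_{ij}-2\epsilon_k)$ containing $A^{(i)}\cap\Xspl$. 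The components of the pruned tree containing $A^{(i)}\cap\Xspl$ pairwise intersect, hence form a chain, so the smallest one, $A_n^{(i)}$, satisfies $A_n^{(i)}\subseteq D_i\subseteq C_i$. Theorem~\ref{theo:main}(b) applied to $(A^{(i)},A^{(j)})$ with the separation just established shows that $A^{(i)}\cap\Xspl$ and $A^{(j)}\cap\Xspl$ lie in distinct components of $\G_n(\mu_{ij}-2\epsilon_k)$, i.e.\ $C_i\ne C_j$, so $C_i\cap C_j=\emptyset$ and hence $A_n^{(i)}\cap A_n^{(j)}=\emptyset$. Since this holds for all pairs from $G(\lambda)$ on $\mathcal E_n$ whenever $n$ exceeds a deterministic threshold, $\pr{\forall A,A'\in G(\lambda),\ A_n\text{ disjoint from }A_n'}\ge1-3\delta_n$ for all large $n$, and letting $n\to\infty$ proves the corollary. (Components of zero $\dist$-mass contain no sample point; with the convention $A_n=\emptyset$ for such $A$ the statement holds trivially for them.)

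The main obstacle is the topological separation step: one must rule out that two distinct components of $\{f\ge\lambda\}$ are ``joined in the limit'' at level $\lambda$, which is exactly where compactness of the support and the nested-intersection argument are needed (the claim would fail for a non-compactly-supported $f$ with components escaping to infinity). The remaining work is bookkeeping — matching the exact constants $\epsilon=6\epsilon_k+2\tilde{\epsilon}$ and $r=\tfrac{\theta}{2}\bigl(4k/(v_d n\,\mu_{ij})\bigr)^{1/d}$ demanded by Theorem~\ref{theo:main}(b), and using part~(a) to ensure that the smallest component containing $A^{(i)}\cap\Xspl$ sits above level $\mu_{ij}-2\epsilon_k$, where part~(b) keeps the two point sets apart.
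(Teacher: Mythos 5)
Your proof is correct and follows essentially the same route as the paper: show that any two components of $G(\lambda)$ are $(6\epsilon_k+2\tilde{\epsilon},r)$-separated once $n$ is large enough, then invoke Theorem \ref{theo:main}(a) to keep each component's sample points together and Theorem \ref{theo:main}(b) to keep distinct components apart. The only differences are cosmetic: you prove from scratch (via nested compact intersections) the topological separation fact that the paper imports as Lemma 14 of \cite{CD101}, and you send a confidence parameter $\delta_n\to 0$ along the sequence where the paper fixes $\delta$ and lets it tend to zero at the end.
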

\begin{proof}
 Let $A$ and $A'$ be separate components of $G(\lambda)$. 
The assumptions ensure that all paths between $A$ and $A'$ traverse a compact set $S$ satisfying $\lambda - \max_{x\in S} f(x) \doteq 
\epsilon_S >0$ (see Lemma 14 of \cite{CD101}).
Let $\epsilon = 6\epsilon_k + 2\tilde{\epsilon}$ and $r=\frac{\theta}{2}\paren{{4k}/{v_d n \lambda}}^{1/d}$.
By uniform continuity of $f$, there exists $N_1$ such that for $n>N_1$, $r$ is small enough so that $\lambda - \max_{x \in
S_{+r}}f(x) > \epsilon_S/2$.
Also, there exists $N_2 > N_1$ such that for $n>N_2$,  $\epsilon < \epsilon_S/2$, in other words 
$\sup_{x\in S_{+r}} f(x) \leq \lambda - \epsilon$.

Since $G_n(\lambda)$ is finite, there exists $N$ such that for $n>N$, all pairs $A, A'$ have a suitable $(\epsilon, r)$-separating set $S$.
Thus by Theorem \ref{theo:main}, for $n> N$, with probability at least $1-3\delta$, $\forall A, A' \in G(\lambda)$, $A\cap \Xspl$ and
$A'\cap \Xspl$ are fully contained in $\G_n(\lambda - 2\epsilon_k)$ 
and are disjoint. They are thus disjoint at any higher level, so $A_n$ and $A_n'$ are also disjoint. 

The above holds for all $\delta>0$, so the statement follows.
\end{proof}

While Theorem \ref{theo:main} establishes 
that a connected set $A$ remains connected below some level, it does not guarantee against parts of $A$ 
becoming disconnected at higher levels, creating spurious clusters.
Note that the removal of spurious clusters can be trivially guaranteed by just letting the parameter $\tilde{\epsilon}$ very large, but the ability of the algorithm to 
discover true clusters is necessarily affected. We are interested in how to set $\tilde{\epsilon}$ in order 
to guarantee the removal of spurious clusters while still recovering important ones.

Theorem \ref{theo:pruning} guarantees that,  by setting $\tilde{\epsilon}$ as $\Omega(\epsilon_k)$ 
(recall $\epsilon_k$ from Theorem \ref{theo:main}), separate 
CCs of the empirical cluster tree correspond to actual clusters of the (unknown) underlying distribution, i.e. 
all spurious clusters are removed.
The setting of $\tilde{\epsilon}$ only requires an upper-bound $F$ on the density $f$
\footnote{We might just use $\max_{i\in [n]}{f_n(X_i)}$ in practice, which in light of Lemma \ref{cor:knnbound} can be a
good surrogate for $F$ (see Figure \ref{fig:modes}).}.  
Note that, under such a setting, consistency is maintained per Corollary \ref{cor:consistency}, and 
in light of Theorem \ref{theo:main} (b), we can expect that interesting clusters are discovered. 
In particular the following salient modes of $f$ are discovered. 

\begin{definition}[$(\epsilon, r)$-salient mode]
\label{def:modes}
An $(\epsilon, r)$-salient mode is a leaf node $A$ of the cluster tree $\braces{G(\lambda)}_{\lambda>0}$ which 
has an ancestor $A_k\supset A$ (possibly $A$ itself) 
satisfying:
\begin{enumerate}[(\rm i)]
\item $A_k$ is the ancestor of a single leaf of $\braces{G(\lambda)}_{\lambda>0}$, namely $A$. 
 \item $A_k$ is large: $\exists x \in A_k, B(x, r_k(x)) \subset A_k$.
\item  $A_k$ is sufficiently separated from other components at its level: let $\lambda \doteq \inf_{x\in A_k} f(x)$;
 $A_k$ and $\paren{\braces{x: f(x)\geq \lambda} \setminus A_k}$
are $(\epsilon, r)$-separated.
\end{enumerate}

\end{definition}
Notice that, under 
the assumptions of Corollary \ref{cor:consistency}, every mode of $f$ is $(\epsilon, r)$-salient 
for sufficiently large $k$ and $1/\tilde{\epsilon}$.

\begin{theorem}[Pruning guarantees]
\label{theo:pruning}
 Let $\delta>0$. 
Under the assumptions of Theorem \ref{theo:main}, the following holds with probability at least $1- 3\delta$.
\begin{enumerate}[(a)]
 \item Suppose the tuning parameter $\tilde{\epsilon} \geq 3 \epsilon_k$. Consider two disjoint 
CCs  $A_n$ and $A_n'$ at the same level in $\braces{\G_n(\lambda)}_{\lambda>0}$.
Let  $V$ be the union of vertices of $A_n$ and $A_n'$, and define $\lambda \doteq \inf_{x\in V} f(x)$. 
The vertices of $A_n$ and those of $A_n'$ are in separate CCs of $G(\lambda)$.
\item  Let $\epsilon = 6\epsilon_k + 2\tilde{\epsilon}$ and 
$r=\frac{\theta}{2}\paren{{4k}/{v_d n \lambda}}^{1/d}$. There exists a $1-1$ map 
from the set of $(\epsilon, r)$-salient modes to the leaves of the empirical tree
$\braces{\G_n(\lambda)}_{\lambda >0}$.
\end{enumerate}

\end{theorem}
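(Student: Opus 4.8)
The plan is to reduce both parts to Theorem~\ref{theo:main}, working throughout on its $1-3\delta$ event, which also supplies $|f_n(X_i)-f(X_i)|\le\epsilon_k$ for every sample point $X_i$ (Lemma~\ref{cor:knnbound}). I will also use the elementary fact that, by construction of Algorithm~\ref{alg:pruning}, for $\mu>\tilde\epsilon$ the CCs of $\G_n(\mu)$ are exactly the traces on $\{X_i:f_n(X_i)\ge\mu\}$ of the CCs of the unpruned graph $G_n(\mu-\tilde\epsilon)$; in particular two vertices with $f_n\ge\mu$ share a CC of $\G_n(\mu)$ iff they share a CC of $G_n(\mu-\tilde\epsilon)$, so connectivity in $\{\G_n(\cdot)\}$ is monotone in the level. \emph{Part (a).} Argue by contradiction. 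Let $\mu$ be the common level of the disjoint CCs $A_n,A_n'$. Then $\mu>\tilde\epsilon$ (otherwise Algorithm~\ref{alg:pruning} merges all of $\G_n(\mu)$), so $A_n$ and $A_n'$ lie in distinct CCs of $G_n(\mu-\tilde\epsilon)$. Suppose all vertices $V$ of $A_n\cup A_n'$ lie in one CC $A$ of $G(\lambda)$, $\lambda\doteq\inf_{x\in V}f(x)$. Then $A$ is connected, and since each $X_i\in V$ has $f_n(X_i)\ge\mu$ we get $\lambda\ge\mu-\epsilon_k>\tilde\epsilon-\epsilon_k\ge2\epsilon_k$. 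Theorem~\ref{theo:main}(a), in its unpruned ($\tilde\epsilon=0$) form, then places $V\subseteq A\cap\Xspl$ in a single CC of $G_n(\lambda-2\epsilon_k)$; and since $\tilde\epsilon\ge3\epsilon_k$ and $\mu-\lambda\le\epsilon_k$ we have $\lambda-2\epsilon_k\ge\mu-\tilde\epsilon$, so $G_n(\lambda-2\epsilon_k)\subseteq G_n(\mu-\tilde\epsilon)$ and $V$ is already connected in $G_n(\mu-\tilde\epsilon)$ --- contradicting that $A_n,A_n'$ are distinct CCs of $\G_n(\mu)$. Hence the vertices of $A_n$ and $A_n'$ lie in separate CCs of $G(\lambda)$.

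\emph{Part (b).} Fix an $(\epsilon,r)$-salient mode $A$ with witnessing ancestor $A_k$ and $\lambda\doteq\inf_{x\in A_k}f(x)$. The set $A_k$ is connected, and $A_k\cap\Xspl\ne\emptyset$: the ball $B(x,r_k(x))\subseteq A_k$ from condition~(ii) has $\dist$-mass at least $k/n$, and the lower bound on $k$ in~(\ref{eq:setting_of_k}) forces such balls to be hit by the sample (a binomial tail). Condition~(iii) forces $\lambda\ge\epsilon>2\epsilon_k$, so Theorem~\ref{theo:main}(a) puts $A_k\cap\Xspl$ into a single nonempty CC $A_{k,n}$ of $\G_n(\lambda-2\epsilon_k)$. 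Moreover condition~(iii) is exactly the hypothesis of Theorem~\ref{theo:main}(b) for the disjoint pair $A_k$ and $\{x:f(x)\ge\lambda\}\setminus A_k$ (whose combined density-infimum is $\lambda$, with the stated $\epsilon$ and $r$), so $A_{k,n}$ contains no point of $(\{x:f(x)\ge\lambda\}\setminus A_k)\cap\Xspl$. Define $\phi(A)$ to be any leaf of the finite empirical tree $\{\G_n(\lambda)\}_{\lambda>0}$ lying below the node $A_{k,n}$ (one exists since $A_{k,n}\ne\emptyset$).

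For injectivity, take distinct salient modes $A\ne\tilde A$ with ancestors $A_k$ (level $\lambda$) and $\tilde A_k$ (level $\tilde\lambda$, WLOG $\lambda\le\tilde\lambda$). By the laminar structure of the cluster tree, $A_k$ and $\tilde A_k$ are nested or disjoint; nesting is impossible, since condition~(i) would then give the larger ancestor two distinct leaf-descendants, so $A_k\cap\tilde A_k=\emptyset$. Suppose $\phi(A)=\phi(\tilde A)=L$. All vertices of $\tilde A_{k,n}$ have $f_n\ge\tilde\lambda-2\epsilon_k\ge\lambda-2\epsilon_k$, so by level-monotonicity $\tilde A_{k,n}$ lies in one CC of $\G_n(\lambda-2\epsilon_k)$, which contains $L\subseteq A_{k,n}$ and hence equals $A_{k,n}$; thus $A_{k,n}\supseteq\tilde A_k\cap\Xspl\ne\emptyset$. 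But $\tilde A_k\cap\Xspl\subseteq(\{x:f(x)\ge\lambda\}\setminus A_k)\cap\Xspl$ (as $\tilde A_k\subseteq\{f\ge\tilde\lambda\}\subseteq\{f\ge\lambda\}$ and $\tilde A_k\cap A_k=\emptyset$), contradicting the previous paragraph. Hence $\phi$ is injective, which is (b).

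I expect the injectivity step to be the crux: ruling out that two distinct salient modes, possibly living at different levels, get absorbed into one empirical component. It is there that the laminarity of the underlying cluster tree, the single-leaf clause~(i), the separation clause~(iii), and the level-monotonicity of the pruned tree all have to be combined. The remainder is bookkeeping: calibrating the $\epsilon_k$-slacks so that the thresholds $\tilde\epsilon\ge3\epsilon_k$ in~(a) and $\epsilon=6\epsilon_k+2\tilde\epsilon$, $r=\frac{\theta}{2}(4k/(v_d n\lambda))^{1/d}$ in~(b) are precisely what Theorem~\ref{theo:main} consumes, plus the one-line binomial argument that each $A_{k,n}$ is nonempty.
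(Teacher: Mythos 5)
Your proof is correct and follows essentially the same route as the paper: part~(a) matches Lemma~\ref{lem:pruning}, pivoting on the connectedness statement (Theorem~\ref{theo:main}(a) / Lemma~\ref{lem:connectedness}) together with the level arithmetic $\lambda-2\epsilon_k\ge\mu-\tilde\epsilon$, and part~(b) matches Lemma~\ref{lem:modes}, using disjointness of $A_k$ and $\tilde A_k$ via laminarity plus clause~(i), the separation statement, and level-monotonicity (you prove injectivity by contradiction where the paper builds the map iteratively over modes ordered by $\lambda_i$, but the content is the same). The only imprecision is calling the nonemptiness of $A_k\cap\Xspl$ ``a binomial tail'': since this must hold simultaneously for all salient modes (a priori unbounded in number), the paper invokes the VC bound of Lemma~\ref{lem:VC} over the class of balls rather than a pointwise Chernoff bound --- this is exactly where the $Cd\ln(n/\delta)$ lower bound on $k$ comes from, which you do cite.
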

\begin{figure}
 \centering
\includegraphics[height=3.9cm]{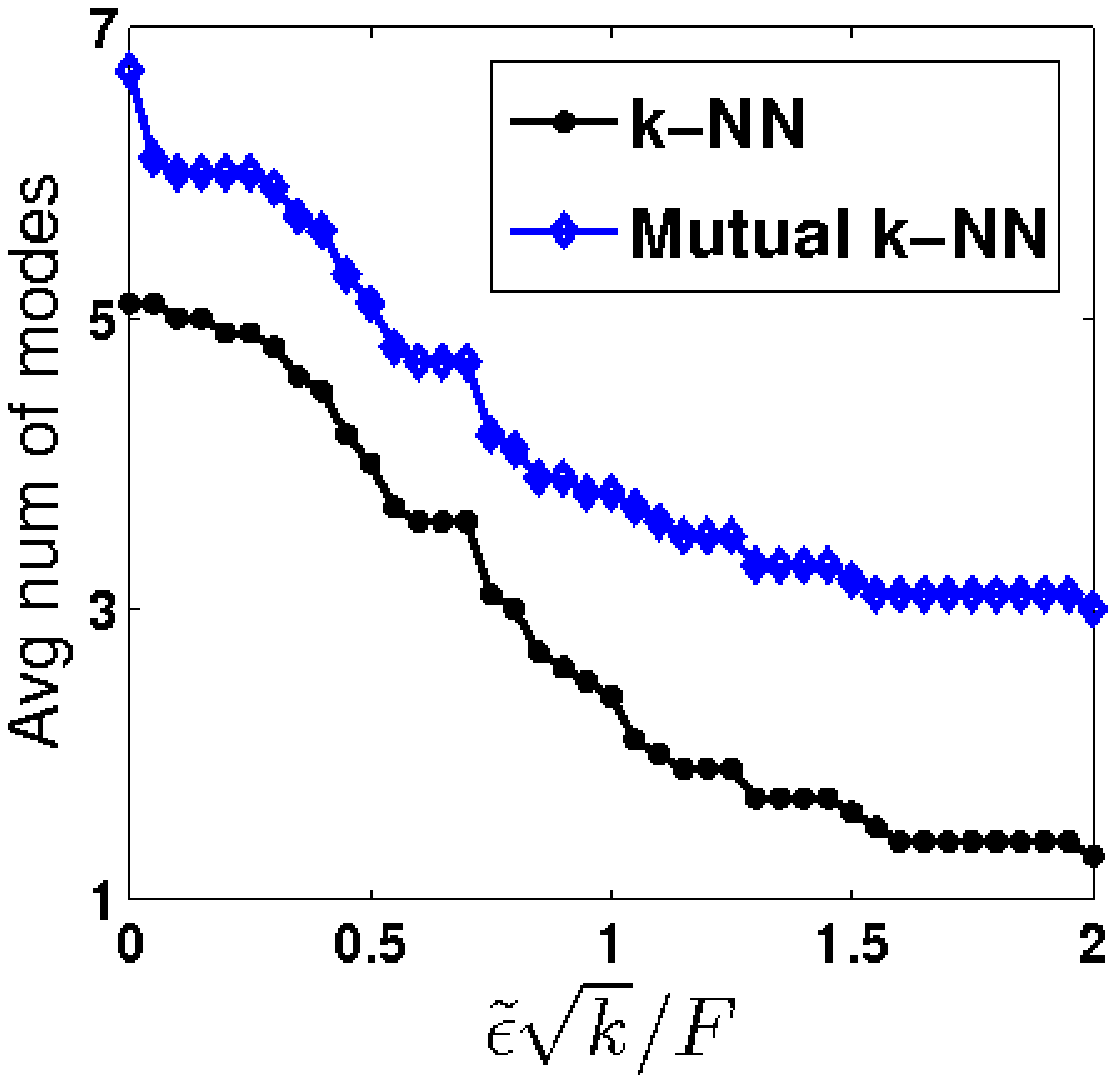}
\includegraphics[height=3.8cm]{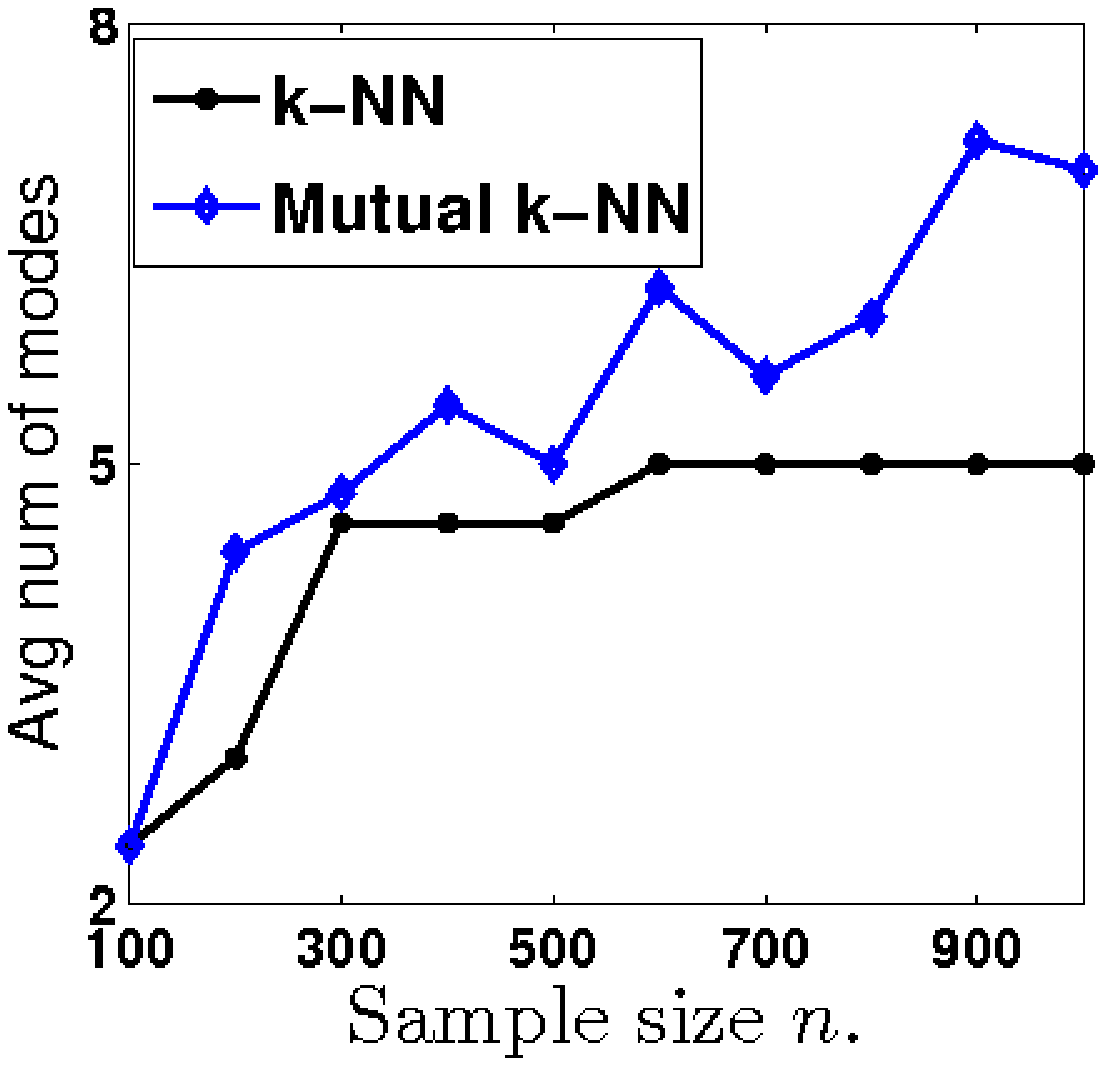}
\caption{(LEFT). Number of modes (leaves of the empirical tree) as we increase $\tilde{\epsilon}$ from 0. The trees are built on 500-samples 
(results are averaged over ten such 500-samples) from the 5-modes mixture $\sum_{i=1}^5 0.2\mathcal{N}(2\sqrt{d}e_i, I_d)$, 
$d=7$. Here $k=(\log n)^{1.5}$,  $\theta =1$, and 
$F$ is the maximum $f_n$ value over the 10 samples. 
The mutual $k$-NN tree being more sparse is rather brittle and requires more pruning. 
(RIGHT) We fix $\tilde{\epsilon} = F/4\sqrt{k}$, $k=(\log n)^{1.5}$, as we increase $n$.
Results are averaged over 10 n-samples for each $n$, and $F$ is again the max $f_n$ value 
over the 10 samples for each $n$. 
The $k$-NN tree quickly asymptotes at 5 modes. The mutual $k$-NN being more brittle, we're underpruning
for $n> 500$, i.e. $\tilde{\epsilon}$ is too small; thus for 
these settings we would require larger $n$ to obtain the correct number of modes. }
\label{fig:modes}
\end{figure}

The behavior of both the $k$-NN and mutual $k$-NN tree, as guaranteed in Theorem \ref{theo:pruning},
 is illustrated in Figure \ref{fig:modes}.

\section{Analysis}
Theorem \ref{theo:main} follows from lemmas \ref{lem:separation} and \ref{lem:connectedness} below. These two lemmas depend on 
the events described by lemmas \ref{cor:knnbound}, \ref{cor:rk_rk_n} and \ref{lem:VC} which happen with a combined probability of at
least $1-3\delta$ for a confidence parameter $\delta>0$.

Theorem \ref{theo:pruning} follows from lemmas \ref{lem:modes} and \ref{lem:pruning} below. 
These two lemmas also depend on the events described by lemmas \ref{cor:knnbound}, \ref{cor:rk_rk_n} 
and \ref{lem:VC} which happen with a combined probability of at least $1-3\delta$.

\subsection{Maintaining Separation}
In this section we establish conditions under which points from two disconnected subsets of $\real^d$ remain disconnected in 
the empirical tree, even after pruning.

The following is an important lemma which establishes the estimation error of $f_n$ relative to $f$ on the sample $\Xspl$. 
Interestingly, although of independent interest, we could not find this 
sort of finite sample statement in the literature on
$k$-NN\footnote{There are however many asymptotic analyses of $k$-NN methods such as \cite{DW104}.}, at least not under our assumptions. 
The proof, presented as supplement in the appendix, is a bit involved and starts with some intuition from 
an asymptotic analysis of \cite{DW104} combined with a form of the Chernoff bound found in \cite{AV102}.

\begin{lemma}
\label{cor:knnbound}
 Suppose $f$ satisfies (A.1) and (A.2). There exists $C = C(\dist)$ such that for $\delta>0$, for
$\epsilon = 11F\sqrt {{\ln (2n/\delta) }/{k}}$ and
\begin{align*}
&121\ln (2n/\delta) \\
&\leq k \leq C\paren{ {F} \sqrt{\ln(2n/\delta)}}^{2(\alpha + d)/(3\alpha+ d)}{ n}^{2\alpha/(3\alpha + d)}, 
\end{align*}
we have with probability at least $1-\delta$ that 
$\sup_{X_i \in \Xspl}\abs{f_n(X_i) - f(X_i)}\leq \epsilon.$
\end{lemma}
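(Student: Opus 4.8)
The plan is to control $\abs{f_n(X_i)-f(X_i)}$ by first sandwiching the empirical $k$-NN radius $r_{k,n}(X_i)$ between two \emph{population} radii, and then comparing those against $f(X_i)$ through the H\"older bound (A.2). Fix an absolute constant $a$ (large enough for the Chernoff step below), put $k_{\pm}\doteq k\pm a\sqrt{k\ln(2n/\delta)}$, and for $x\in\real^d$ let $\rho_{\pm}(x)$ be the radius of the smallest ball centered at $x$ of $\dist$-mass $k_{\pm}/n$ (well defined since $\dist$ is absolutely continuous and $k_{\pm}\ll n$ in the stated range). The lower bound $k\ge121\ln(2n/\delta)$ serves to guarantee $\sqrt{\ln(2n/\delta)/k}\le1/11$, hence $\epsilon\le F$, $a\sqrt{\ln(2n/\delta)/k}$ small, and $k/k_{\pm}=1+O(\sqrt{\ln(2n/\delta)/k})$; I use this throughout. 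Conditioning on $X_i=x$, the count $\abs{(\Xspl\setminus\braces{X_i})\cap B(x,\rho_{+}(x))}$ is $\mathrm{Bin}(n-1,k_{+}/n)$ with mean $\approx k_{+}$, so a Chernoff lower-tail bound (of the form in \cite{AV102}) shows it is $\ge k$ except on an event of probability $\le\delta/(2n)$; symmetrically $\abs{(\Xspl\setminus\braces{X_i})\cap B(x,\rho_{-}(x))}<k$ except on an event of the same size. As these conditional estimates hold for every $x$, they hold unconditionally, and a union bound over $i\in[n]$ and the two directions gives: with probability at least $1-\delta$, $\rho_{-}(X_i)\le r_{k,n}(X_i)\le\rho_{+}(X_i)$ for all $i$, so that
\[
\frac{k}{n\,v_d\,\rho_{+}^d(X_i)}\ \le\ f_n(X_i)\ \le\ \frac{k}{n\,v_d\,\rho_{-}^d(X_i)}\qquad\text{for every }i.
\]

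Next I convert radii to density values. For either sign, $f$ varies by at most $L\rho_{\pm}^\alpha(X_i)$ over $B(X_i,\rho_{\pm}(X_i))$, which has $\dist$-mass $k_{\pm}/n$; integrating over this ball gives $\abs{k_{\pm}/(n\,v_d\,\rho_{\pm}^d(X_i))-f(X_i)}\le L\rho_{\pm}^\alpha(X_i)$, and multiplying by $k/k_{\pm}=1+O(\sqrt{\ln(2n/\delta)/k})$ and using $f(X_i)\le F$ turns the display above into
\[
\abs{f_n(X_i)-f(X_i)}\ \lesssim\ L\,\rho_{\pm}^\alpha(X_i)\ +\ F\sqrt{\ln(2n/\delta)/k}.
\]
It then remains only to bound the H\"older-bias term $L\rho_{\pm}^\alpha(X_i)$ uniformly in $i$, and this is where the upper restriction on $k$ enters.

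Set $\tau\doteq(2L)^{d/(\alpha+d)}\bigl(2k/(v_d n)\bigr)^{\alpha/(\alpha+d)}$ (any quantity of order $(Lk/n)^{\alpha/(\alpha+d)}$, with constant depending only on $v_d,\alpha,d$, will serve). If $f(X_i)\ge\tau$ then, by (A.2), $f\ge f(X_i)/2$ on the ball $B\bigl(X_i,(2k_{\pm}/(n\,v_d\,f(X_i)))^{1/d}\bigr)$ --- and the definition of $\tau$ is exactly what makes this self-consistent --- so that ball already carries $\dist$-mass $\ge k_{\pm}/n$, whence $\rho_{\pm}(X_i)\le(2k_{\pm}/(n\,v_d\,f(X_i)))^{1/d}$ and $L\rho_{\pm}^\alpha(X_i)\le L(2k_{\pm}/(n\,v_d\,f(X_i)))^{\alpha/d}$, a quantity decreasing in $f(X_i)$ and so $\lesssim\tau$ on $\braces{f(X_i)\ge\tau}$. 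If instead $f(X_i)<\tau$ then, since $\tau\le\epsilon$ (checked below), $f(X_i)-\epsilon<0$ and the lower inequality $f_n(X_i)\ge f(X_i)-\epsilon$ holds trivially; for the upper one, $k_{-}/n=\dist(B(X_i,\rho_{-}(X_i)))\le(\tau+L\rho_{-}^\alpha(X_i))\,v_d\,\rho_{-}^d(X_i)$, and splitting on whether $L\rho_{-}^\alpha(X_i)$ exceeds $\tau$ forces $k/(n\,v_d\,\rho_{-}^d(X_i))\lesssim\tau$ in either case, so $f_n(X_i)\lesssim\tau\le f(X_i)+\epsilon$. Thus in all cases $\abs{f_n(X_i)-f(X_i)}=O(\tau)+O(F\sqrt{\ln(2n/\delta)/k})$. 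A short computation then shows that $\tau\lesssim F\sqrt{\ln(2n/\delta)/k}$ is equivalent, up to a constant $C=C(\dist)$ absorbing $L,v_d,\alpha,d$, to the stated bound $k\le C(F\sqrt{\ln(2n/\delta)})^{2(\alpha+d)/(3\alpha+d)}n^{2\alpha/(3\alpha+d)}$; picking $a$ and the constants hidden in ``$\lesssim$'' so that the two error contributions sum to at most $11F\sqrt{\ln(2n/\delta)/k}=\epsilon$ finishes the proof, after the union bound over $i$.

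The one genuinely delicate step is the low-density regime $f(X_i)<\tau$ above: there the $k$-NN ball around $X_i$ need not be small, so (A.2) on its own cannot bound the bias --- one must instead observe that a large $k$-NN radius forces $f_n(X_i)$ to be correspondingly small, while $f(X_i)-\epsilon$ has already gone negative. Choosing the threshold $\tau$ so that the high- and low-density estimates meet, and recognizing that this common scale is precisely the one forced by the upper bound on $k$ (equivalently, the $n^{2\alpha/(3\alpha+d)}$ rate), is where essentially all the work lies; the rest is a routine Chernoff and union-bound argument, guided by the asymptotic bias/variance behavior of $k$-NN density estimates from \cite{DW104}.
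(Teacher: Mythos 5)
Your proposal is correct: the radius sandwich $\rho_-(X_i)\le r_{k,n}(X_i)\le\rho_+(X_i)$ via Chernoff on counts in the deterministic balls of mass $k_\pm/n$ (conditioned on $X_i=x$, then integrated and union-bounded over $i$), the H\"older control $\abs{k_\pm/(n v_d\rho_\pm^d)-f(X_i)}\le L\rho_\pm^\alpha$, and the threshold analysis at $\tau\asymp(Lk/n)^{\alpha/(\alpha+d)}$ all go through, and your final computation recovers exactly the stated exponent $k\lesssim (F\sqrt{\ln(2n/\delta)})^{2(\alpha+d)/(3\alpha+d)}n^{2\alpha/(3\alpha+d)}$. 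The paper's proof (Lemma~\ref{lem:knnbound} plus the corollary) uses the same engine --- the Chernoff bound of Lemma~\ref{lem:chernoff} applied to a deterministic reference ball after conditioning on $X_i=x$, with the H\"older modulus forcing the bias below the fluctuation --- but organizes the argument differently: it bounds the two one-sided deviation events directly, using for each a reference ball whose \emph{mass} is perturbed by the factor $1\mp\epsilon/4F$ (the under-estimation direction requiring the ``related ball'' trick borrowed from \cite{DW104} and the restriction to $f(x)>\epsilon$), and the smallness of all relevant balls is enforced once through the single condition $k/(n\epsilon)\le c_\epsilon/4$ rather than through your explicit high/low-density case split at $\tau$. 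Your version separates the stochastic step (a two-sided sandwich of $r_{k,n}$, with count-perturbed masses $k_\pm$) from a purely deterministic bias analysis, which is arguably more transparent and makes visible where the low-density regime genuinely matters (only for the event $f_n>f+\epsilon$ at points with $f(X_i)<\tau$, where a large $k$-NN ball forces $f_n\lesssim\tau$); the paper's version avoids the case split at the cost of asymmetric treatment of the two tails. The only caveats are bookkeeping ones you already flag: the constant $a$ must be fixed (e.g.\ $a=4$ suffices for the lower-tail step given $k\ge121\ln(2n/\delta)$) so that the variance contribution $\approx 2aF\sqrt{\ln(2n/\delta)/k}$ plus the bias $O(\tau)$ fits under $11F\sqrt{\ln(2n/\delta)/k}$, with the bias constant absorbed into $C(\dist)$.
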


The next lemma bounds $r_{k, n}(X_i)$ in terms of $r_{k}(X_i)$, and hence, in terms of the density at $X_i$. The proof 
is provided as supplement in the appendix. 

\begin{lemma}
\label{cor:rk_rk_n}
Suppose $f$ satisfies (A.1) and (A.2). Fix $\lambda>0$ and let $\mathcal{L_\lambda} \doteq \braces{x: f(x) \geq \lambda}$. 
\begin{enumerate}[(a)]
 \item Let $r \doteq \frac{1}{2}(\lambda/2L)^{1/\alpha}$. We have 
$\forall x, x' \in \real^d$, $\norm{x-x'} \leq 2r \implies \abs{f(x) - f(x')} \leq \lambda/2$. If in addition $x \in
\mathcal{L_\lambda}$, it follows that $f(x)/2\leq f(x') \leq 2f(x)$.
\item Suppose 
$k \leq {2^{-(d+3)}} v_d(2L)^{-d/\alpha} \lambda^{(d + \alpha)/\alpha} n$. We have 
$$\forall x \in \mathcal{L}_\lambda, \, r_k(x) \leq \min\braces{2^{-3/d} r, \paren{\frac{2k}{v_d n f(x)}}^{1/d}}.$$
For $\delta>0$, if in addition $k \geq 192\ln (2n/\delta)$, we have with probability at least $1-\delta$ that for all $X_i \in
\Xspl \cap \mathcal{L_\lambda}$
\begin{align*}
2^{-3/d} r_k(X_i) \leq r_{k, n}(X_i) \leq 2^{3/d} r_k(X_i).
\end{align*}
\end{enumerate}

\end{lemma}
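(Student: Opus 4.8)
\textbf{Part (a)} will be a one-line H\"older estimate. Since $r=\tfrac12(\lambda/2L)^{1/\alpha}$ we have $L(2r)^{\alpha}=\lambda/2$, so (A.2) gives $\abs{f(x)-f(x')}\le L\norm{x-x'}^{\alpha}\le\lambda/2$ whenever $\norm{x-x'}\le 2r$; and if in addition $f(x)\ge\lambda$, then $\lambda/2\le f(x)/2$, whence $f(x)/2\le f(x)-\lambda/2\le f(x')\le f(x)+\lambda/2\le 2f(x)$. For the \emph{deterministic} bound in part (b) the plan is to show that each of the two quantities $\rho$ inside the minimum satisfies $\dist(B(x,\rho))\ge k/n$ for $x$ in the level set; since $f$ is a density the map $\rho'\mapsto\dist(B(x,\rho'))$ is continuous and nondecreasing, so this forces $r_k(x)\le\rho$ and hence $r_k(x)\le\min\{\cdot,\cdot\}$. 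First I would verify $\rho\le 2r$ in both cases — immediate for $\rho=2^{-3/d}r$, and for $\rho=(2k/(v_d n f(x)))^{1/d}$ a consequence of $f(x)\ge\lambda$ and the hypothesis $k\le 2^{-(d+3)}v_d(2L)^{-d/\alpha}\lambda^{(d+\alpha)/\alpha}n$ — so that part (a) applies throughout $B(x,\rho)$ and $f\ge f(x)/2$ there, giving
\[
\dist(B(x,\rho))\ \ge\ \tfrac{f(x)}{2}\,v_d\,\rho^{d}.
\]
For $\rho=(2k/(v_d n f(x)))^{1/d}$ the right-hand side is exactly $k/n$; for $\rho=2^{-3/d}r$, substituting $r^{d}=2^{-d}(2L)^{-d/\alpha}\lambda^{d/\alpha}$ and using $f(x)\ge\lambda$ turns it into a constant multiple of $v_d(2L)^{-d/\alpha}\lambda^{(d+\alpha)/\alpha}$ which is $\ge k/n$ by the assumed bound on $k$ — this is the step where the constant $2^{-(d+3)}$ is consumed, and matching the powers of $2$ exactly is the only fussy point.

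\textbf{The probabilistic bound} in part (b) I would reduce to binomial concentration. The deterministic bound just proved gives $r_k(X_i)\le 2^{-2/d}r$, hence both $B(X_i,2^{-3/d}r_k(X_i))$ and $B(X_i,2^{3/d}r_k(X_i))$ have radius $\le 2r$, so part (a) pins $f$ between $f(X_i)/2$ and $2f(X_i)$ on each of them. Combining with the identity $\dist(B(X_i,r_k(X_i)))=k/n$, a two-sided comparison of $\int f$ over the nested balls yields
\[
\dist\!\paren{B(X_i,2^{-3/d}r_k(X_i))}\le\tfrac{k}{2n},\qquad
\dist\!\paren{B(X_i,2^{3/d}r_k(X_i))}\ge\tfrac{2k}{n}.
\]
Since $r_{k,n}(X_i)\le\rho$ holds iff $B(X_i,\rho)$ contains at least $k$ points of $\Xspl\setminus\{X_i\}$, it then suffices to control these empirical counts. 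Conditioning on $X_i=x$, the number of the remaining $n-1$ samples in a ball of fixed radius about $x$ is a Binomial variable, and a multiplicative Chernoff bound shows the shrunken ball holds fewer than $k$ of them while the enlarged ball holds at least $k$, each but for probability $e^{-\Omega(k)}$; a union bound over $i\in[n]$, absorbed by the hypothesis $k\ge 192\ln(2n/\delta)$, gives both inequalities simultaneously with probability $\ge 1-\delta$. Equivalently one may quote the uniform deviation estimate for balls around sample points used elsewhere in the paper (Lemma~\ref{lem:VC}); that set system realises only polynomially many subsets of $\Xspl$, which is why no dimension factor enters and $k\gtrsim\ln(2n/\delta)$ suffices.

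\textbf{Anticipated obstacle.} The one genuinely non-routine point is that the radii $2^{\pm 3/d}r_k(X_i)$ are data-dependent through the center $X_i$, so a plain Chernoff bound for a single fixed ball is not enough and one needs uniformity over the family of such balls. The conditioning argument above settles this cleanly precisely because $r_k(\cdot)$ is a deterministic function and each ball depends on $X_i$ alone and not on the other $n-1$ samples — otherwise a VC-type bound for the class of balls would be required. The rest is bookkeeping, the most delicate piece being to shepherd the powers of $2$ so that the constants $2^{\pm 3/d}$, $2^{-(d+3)}$ and $192$ come out as stated (I would not be surprised if the deterministic $2^{-3/d}r$ bound really wants a slightly smaller constant, e.g.\ $2^{-(d+4)}$, in the hypothesis on $k$).
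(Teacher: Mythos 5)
Your proof takes essentially the same route as the paper: part (a) is the direct H\"older computation; the deterministic bound in (b) is obtained by verifying $\dist(B(x,\rho))\ge k/n$ for each candidate $\rho$ after checking $\rho\le 2r$ so that part (a) yields $f\ge f(x)/2$ on the ball; and the probabilistic bound conditions on $X_i=x$ (making $r_k(x)$ deterministic), applies multiplicative Chernoff to the binomial counts in the dilated and shrunken balls, and union-bounds over the $n$ sample points — exactly the structure of the paper's Lemma~\ref{lem:rk_rk_n} and the corollary derivation. Your suspicion about the constant is also well founded: since one only has $f\ge f(x)/2\ge\lambda/2$ on $B(x,r_k(x))$, the hypothesis $k\le 2^{-(d+3)}v_d(2L)^{-d/\alpha}\lambda^{(d+\alpha)/\alpha}n$ yields $v_d r_k^d(x)\lambda/2\le k/n$ and hence $r_k(x)\le 2^{-2/d}r$, not $2^{-3/d}r$; the paper's displayed chain $v_d r_k^d(x)\lambda\le\dist(B(x,r_k(x)))$ tacitly uses $\lambda$ rather than $\lambda/2$ as the lower bound for $f$ over the ball, which is where the extra factor of two gets dropped. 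Tightening the hypothesis to $2^{-(d+4)}$, as you propose, or weakening the conclusion to $2^{-2/d}r$ (which still suffices for the application to Lemma~\ref{lem:rk_rk_n}, since $2^{3/d}\cdot 2^{-2/d}r\le 2r$) repairs this.
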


The main separation lemma is next. It says that if $A$ and $A'$ are separated by a sufficiently large low density region, then they 
remain separated in the empirical tree. 

\begin{lemma}[Separation]
\label{lem:separation}
Suppose $f$ satisfies (A.1) and (A.2). Let $G_n$ be the $k$-NN or mutual $k$-NN graph. 
Define $\epsilon_k \doteq 11F\sqrt{\ln(2n/\delta)/k}$, and let $\delta>0$.
 There exists $C = C(\dist)$ such that, for
\begin{align*}
&192\ln (2n/\delta) \leq k \\
& \leq  C\paren{F\sqrt{\ln (n/\delta)}}^{{2(\alpha + d)}/{(3\alpha + d)}} n^{{2\alpha}/{(3\alpha + d)}},
\end{align*}
the following holds with probability at least $1- 2\delta$ simultaneously for any two disjoint subsets 
$A, A'$ of $\real^d$.

Let $\lambda = \inf_{x\in A\cup A'}f(x)$. 
If $A$ and $A'$ are $(\epsilon, r)$-separated for $\epsilon = 6\epsilon_k + 2\tilde{\epsilon}$ and 
$r=\frac{\theta}{2}\paren{{4k}/{v_d n \lambda}}^{1/d}$, then
$A\cap \Xspl$ and $A'\cap \Xspl$ are disconnected in 
$G_n(\lambda - 2\epsilon_k -\tilde{\epsilon})$ and therefore in $\G_n(\lambda -2\epsilon_k)$. 
\end{lemma}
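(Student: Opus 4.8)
The plan is to show that any edge of $G_n$ (or the mutual $k$-NN graph) that would cross from $A$ to $A'$ must have at least one endpoint $X_i$ whose empirical density estimate $f_n(X_i)$ is below $\lambda - 2\epsilon_k - \tilde{\epsilon}$, so that such edges are absent from $G_n(\lambda - 2\epsilon_k - \tilde{\epsilon})$; the conclusion for $\G_n(\lambda - 2\epsilon_k)$ then follows from the pruning rule, since $\G_n(\lambda - 2\epsilon_k)$ only reconnects components that are already joined in $G_n(\lambda - 2\epsilon_k - \tilde{\epsilon})$. First I would condition on the events of Lemmas \ref{cor:knnbound} and \ref{cor:rk_rk_n}, which together hold with probability at least $1 - 2\delta$ under the stated range of $k$; this gives simultaneously $\sup_{X_i}\abs{f_n(X_i)-f(X_i)}\le \epsilon_k$ and the two-sided control $2^{-3/d} r_k(X_i)\le r_{k,n}(X_i)\le 2^{3/d} r_k(X_i)$ on the relevant level set. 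Note these are $\dist$-independent high-probability events, so the remaining argument can be made deterministic and uniform over all pairs $A,A'$.

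Next I would set up the geometry. Let $S$ be the separating set guaranteed by $(\epsilon,r)$-separation, with $\sup_{x\in S_{+r}} f(x) \le \lambda - \epsilon = \lambda - 6\epsilon_k - 2\tilde{\epsilon}$. Suppose for contradiction that some edge of $G_n(\lambda - 2\epsilon_k - \tilde{\epsilon})$ connects a vertex $X_i$ (in the component containing $A\cap\Xspl$) to a vertex $X_j$ (in the component containing $A'\cap\Xspl$). Because $f_n(X_i), f_n(X_j) \ge \lambda - 2\epsilon_k - \tilde{\epsilon}$, the estimation bound gives $f(X_i), f(X_j) \ge \lambda - 3\epsilon_k - \tilde{\epsilon}$; in particular both points lie well inside the level set where Lemma \ref{cor:rk_rk_n} applies, provided $\epsilon_k$ is small enough relative to $\lambda$ — a condition I would check is implied by the upper bound on $k$ (this is where the constants in $192\ln(2n/\delta)\le k$ and the exponent ${2(\alpha+d)}/{(3\alpha+d)}$ get used). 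The key point is that the straight segment between $X_i$ and $X_j$ must cross $S$ (any path between $A$ and $A'$ does), so there is a point $s\in S$ on this segment; since the edge exists, one of $X_i, X_j$ — say $X_i$ — is within distance $\theta\, r_{k,n}(X_i)$ of the other, hence within that distance of $s$, i.e. $s\in B(X_i, \theta r_{k,n}(X_i))$.

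The crux is then a radius estimate showing $\theta r_{k,n}(X_i) \le r$, so that $X_i$ actually lies in $S_{+r}$, forcing $f(X_i)\le \lambda - 6\epsilon_k - 2\tilde{\epsilon}$, which contradicts $f(X_i)\ge\lambda - 3\epsilon_k - \tilde{\epsilon}$ as soon as $3\epsilon_k + \tilde{\epsilon} > 0$, i.e. always (and with the stated $\epsilon$ there is slack to spare). To get $\theta r_{k,n}(X_i)\le r$: from Lemma \ref{cor:rk_rk_n}(b), $r_{k,n}(X_i)\le 2^{3/d} r_k(X_i) \le 2^{3/d}\paren{2k/(v_d n f(X_i))}^{1/d}$; using $f(X_i)\ge \lambda - 3\epsilon_k - \tilde\epsilon$, and observing that $\lambda > 2\epsilon_k$-type conditions (a consequence of $\epsilon=6\epsilon_k+2\tilde\epsilon \le$ the density drop, which needs $\lambda \ge \epsilon$, hence $\lambda\ge 6\epsilon_k$) give $f(X_i)\ge \lambda/2$, one obtains $r_{k,n}(X_i)\le 2^{3/d}\paren{4k/(v_d n\lambda)}^{1/d}$, so $\theta r_{k,n}(X_i)\le \theta 2^{3/d}\paren{4k/(v_d n\lambda)}^{1/d}$. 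This is not quite $\le r = \tfrac{\theta}{2}\paren{4k/(v_d n\lambda)}^{1/d}$, so I would instead apply the tighter alternative in the min of Lemma \ref{cor:rk_rk_n}(b) — namely $r_k(x)\le 2^{-3/d} r'$ where $r' = \tfrac12(\lambda'/2L)^{1/\alpha}$ is a smoothness radius — or, more cleanly, absorb the constant $2^{3/d}$ by tightening the definition of $r$ in the bookkeeping; the honest accounting of these dimensional constants, making sure the final $r$ in the statement is exactly $\tfrac{\theta}{2}\paren{4k/(v_d n\lambda)}^{1/d}$, is the main obstacle and the one place real care is needed. Once $\theta r_{k,n}(X_i)\le r$ is secured, the contradiction closes, no crossing edge can exist in $G_n(\lambda - 2\epsilon_k - \tilde\epsilon)$, and the pruning step passes the disconnection up to $\G_n(\lambda - 2\epsilon_k)$, completing the proof.
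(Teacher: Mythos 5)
Your proposal is correct and follows essentially the same route as the paper: condition on the events of Lemmas \ref{cor:knnbound} and \ref{cor:rk_rk_n}, observe that no vertex of $G_n(\lambda - 2\epsilon_k - \tilde{\epsilon})$ can lie in $S_{+r}$ (its $f$, hence $f_n$, value would be too low), and use $f(X_i)\geq \lambda/2$ on the surviving vertices to bound $r_k$, hence $r_{k,n}$, hence the length of any edge that would cross the separator. The $2^{3/d}$ bookkeeping you flag is real (the paper's own chain $2^{3/d}\max r_k \leq 2r/\theta$ is loose by exactly that factor), but the paper does recover a factor of $2$ that you give up: since both endpoints of a crossing edge lie outside a ball $B(x,r)$ centered at a point $x \in S$ on that edge, the edge has length greater than $2r$, so one only needs $\theta \max_{V} r_{k,n} \leq 2r$ rather than your $\theta r_{k,n}(X_i)\leq r$.
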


\begin{proof}
Applying Lemma \ref{cor:knnbound}, it's immediate that, with probability at least $1-\delta$, all points of any $A\cup A'\cap \Xspl$  
are in $G_n(\lambda - \epsilon_k)$ and lower levels, and no point 
from $S_{+r}\cap \Xspl$ is in $G_n(\lambda - 5\epsilon_k-2\tilde{\epsilon})$ or
higher levels.
Thus any path between $A$ and $A'$ in $G_n(\lambda - 2\epsilon_k -\tilde{\epsilon})$ must have an edge through the center 
$x\in S$ of a ball $B(x, r)\subset S_{+r}$. This edge must therefore have length greater than $2r$. We just need to show
that no such edge exists in $G_n(\lambda - 2\epsilon_k -\tilde{\epsilon})$.

Let $V$ be the set of points (vertices) in  $G_n(\lambda - 2\epsilon_k -\tilde{\epsilon})$. By Lemma \ref{cor:knnbound}, 
$\min_{X_i \in V} f(X_i) \geq \lambda - 3\epsilon_k -\tilde{\epsilon}$.
Given the density assumption on $S$, $\lambda \geq 6\epsilon_k + 2\tilde{\epsilon}$ so 
$\min_{X_i \in V} f(X_i)\geq \lambda/2$ and $V \subset \mathcal{L}_{\epsilon_k}$. 
Now, given the range of $k$, Lemma \ref{cor:rk_rk_n} holds for the level set $\mathcal{L}_{\epsilon_k}$.
It follows that with
probability at least $1-\delta$ (uniform over any such choice of $A, A'$ since the event is a function of $\mathcal{L}_{\epsilon_k}$), 
$$\max_{X_i \in V} r_{k,n} (X_i) \leq 2^{3/d} \max_{X_i \in V}r_k(X_i) \leq \frac{2r}{\theta}.$$
Thus, edge lengths in $G_n(\lambda - 2\epsilon_k -\tilde{\epsilon})$ are at most $2 r$.
\end{proof}

\subsubsection{Identifying Modes}
As a corollary to Lemma \ref{lem:separation}, we can guarantee in Lemma \ref{lem:modes} that certain salient modes are recovered by the 
empirical cluster tree. 
For this to happen, we require in Definition \ref{def:modes} (\rm ii) that an $(\epsilon, r)$-salient mode 
$A$ is contained in a sufficiently large set $A_k$ so that we sample points near the mode.

We start with the following VC lemma establishing conditions under which subsets of $\real^d$ contain samples from $\Xspl$.
\begin{lemma}[Lemma 5.1 of \cite{BBL103}]
Suppose $\C$ is a class of subsets of $\real^d$. Let $\Sp_\C(2n)$ denote the $2n$-shatter coefficient of $\C$. Let $\dist_n$ denote the
empirical 
distribution over $n$ samples drawn i.i.d from $\dist$. For $\delta>0$, with probability at least $1-\delta$, 
\begin{align*}
\sup_{A\in \C} \frac{\dist(A) - \dist_n(A)}{\sqrt{\dist(A)}} \leq 2 \sqrt{\frac{\log \Sp_\C(2n) + \log 4/\delta}{n}}.
\end{align*}
\label{lem:VC}
\end{lemma}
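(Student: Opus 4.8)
The plan is to invoke the classical relative Vapnik--Chervonenkis deviation bound; since the statement is quoted verbatim from \cite{BBL103}, I will only sketch the standard argument and then cite the reference for the routine details. The goal is to establish that for every $t>0$,
\[
\pr{\sup_{A\in\C}\frac{\dist(A)-\dist_n(A)}{\sqrt{\dist(A)}}>t}\ \le\ 4\,\Sp_\C(2n)\,e^{-nt^2/4},
\]
from which the lemma follows immediately by setting the right-hand side equal to $\delta$ and solving for $t$: indeed $4\,\Sp_\C(2n)\,e^{-nt^2/4}=\delta$ gives $nt^2/4=\log\Sp_\C(2n)+\log(4/\delta)$, i.e. $t=2\sqrt{(\log\Sp_\C(2n)+\log(4/\delta))/n}$.

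First I would carry out a symmetrization step with a ghost sample $\Xspl'=\{X_i'\}_{i=1}^n$ drawn independently from $\dist$, whose empirical measure I denote $\dist_n'$. The twist relative to plain symmetrization is the random normalizer: one replaces $\dist(A)$ in the denominator by the pooled empirical mass $\tfrac12(\dist_n(A)+\dist_n'(A))$. Concretely, on the event that the supremum is witnessed (to within an arbitrarily small slack) by some set $A^\star$ depending only on $\Xspl$, a one-sided Chebyshev/Bernstein estimate gives, conditionally on $\Xspl$, that with probability at least $1/2$ the ghost mass $\dist_n'(A^\star)$ is not much below $\dist(A^\star)$; combined with $\dist_n(A^\star)\le\dist(A^\star)-t\sqrt{\dist(A^\star)}$ this keeps the numerator $\dist_n'(A^\star)-\dist_n(A^\star)$ of order $t\sqrt{\dist(A^\star)}$ and shows the pooled denominator is comparable to $\sqrt{\dist(A^\star)}$. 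Sets with $\dist(A^\star)$ of order smaller than $t^2$ are disposed of trivially, since then $\dist(A^\star)-\dist_n(A^\star)\le\dist(A^\star)\le t\sqrt{\dist(A^\star)}$. Chaining these observations yields
\[
\pr{\sup_{A}\frac{\dist(A)-\dist_n(A)}{\sqrt{\dist(A)}}>t}\ \le\ 2\,\pr{\sup_{A}\frac{\dist_n'(A)-\dist_n(A)}{\sqrt{(\dist_n(A)+\dist_n'(A))/2}}>\frac{t}{2}}.
\]

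Next I would condition on the pooled multiset $\Xspl\cup\Xspl'$ and randomize: the pair $(\dist_n,\dist_n')$ has the same conditional law as the one obtained by independently swapping each matched pair $(X_i,X_i')$ with probability $1/2$. Since the numerator and the (now fixed, under the randomization) denominator depend on $A$ only through $A\cap(\Xspl\cup\Xspl')$, the supremum over $\C$ collapses to a maximum over at most $\Sp_\C(2n)$ distinct subsets. For each fixed such subset the numerator is a sum of $n$ independent, bounded, mean-zero terms whose variance is exactly cancelled by the pooled-mass normalization, so a Hoeffding/Chernoff bound gives a per-set tail of $e^{-nt^2/4}$; a union bound over the $\Sp_\C(2n)$ subsets together with the factor $2$ from symmetrization produces the claimed $4\,\Sp_\C(2n)e^{-nt^2/4}$.

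The only genuinely delicate point is the symmetrization with the data-dependent denominator: one must argue that, with probability at least $1/2$ over the ghost sample, $\dist_n'(A^\star)$ is simultaneously close enough to $\dist(A^\star)$ to lower-bound the pooled denominator and to keep the numerator large, while separately handling sets of tiny true mass. Once this variance-normalized symmetrization is in place, the randomization, the union bound over the shatter coefficient, and the Chernoff estimate are entirely routine, and I would refer to \cite{BBL103} (Lemma 5.1) for the full computation.
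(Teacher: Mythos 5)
Your proposal is correct: this lemma is imported verbatim from \cite{BBL103} and the paper offers no proof of its own, so the only ``approach'' to compare against is the citation itself, which you also invoke. Your sketch of the standard relative-deviation argument (symmetrization with a ghost sample and pooled-mass normalizer, the probability-$1/2$ Chebyshev step, separate treatment of sets of mass $O(t^2)$, swapping, union bound over $\Sp_\C(2n)$, and Chernoff) is the classical proof of exactly this inequality and correctly recovers the stated constant from $4\,\Sp_\C(2n)e^{-nt^2/4}=\delta$.
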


\begin{lemma}[Modes]
\label{lem:modes}
Suppose $f$ satisfies (A.1) and (A.2). Let $G_n$ be the $k$-NN or mutual $k$-NN graph. 
Let $\delta>0$. 
There exist $C$ and $C' = C'(\dist)$ such that, for
\begin{align*}
 &C  d \ln (n/\delta)\\
& \leq k \leq  C'  \paren{F\sqrt{\ln (n/\delta)}}^{{2(\alpha +
d)}/{(3\alpha +
d)}} n^{{2\alpha}/{(3\alpha + d)}} 
\end{align*}
the following holds with probability at least $1- 3\delta$.
Let $\epsilon = 6\epsilon_k + 2\tilde{\epsilon}$ and 
$r=\frac{\theta}{2}\paren{{4k}/{v_d n \lambda}}^{1/d}$. There exists a $1-1$ map 
from the set of $(\epsilon, r)$-salient modes to the leaves of the empirical tree
$\braces{\G_n(\lambda)}_{\lambda >0}$.
\end{lemma}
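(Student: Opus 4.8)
The plan is to attach to every $(\epsilon,r)$-salient mode a single connected component of the pruned tree that contains a sample point drawn near the mode but no sample from a competing cluster at that level, and then to descend inside that component to a leaf. Injectivity will come from the fact that distinct salient modes have disjoint witnessing ancestors in the population tree, which forces the associated empirical components to be vertex-disjoint.

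Fix once and for all, for each $(\epsilon,r)$-salient mode $A$, a witnessing ancestor $A_k$ as in Definition \ref{def:modes}, and put $\lambda\doteq\inf_{x\in A_k}f(x)$; condition (iii) forces $\lambda\geq 6\epsilon_k+2\tilde{\epsilon}$, so in particular $\lambda-2\epsilon_k>0$. Condition (ii) supplies $x\in A_k$ with $B(x,r_k(x))\subset A_k$, a ball of $\dist$-mass $k/n$. Since the class $\C$ of balls in $\real^d$ has $\Sp_\C(2n)$ polynomial in $n$, the lower bound $k\geq Cd\ln(n/\delta)$ makes the right side of Lemma \ref{lem:VC} strictly less than $\sqrt{k/n}=\sqrt{\dist(B(x,r_k(x)))}$, so $\dist_n(B(x,r_k(x)))>0$ and the ball contains a sample point $X_i$; this (and only this) is where the lower bound on $k$ enters beyond what Lemma \ref{lem:separation} already needs. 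Because $X_i\in A_k$ we have $f(X_i)\geq\lambda$, so Lemma \ref{cor:knnbound} gives $f_n(X_i)\geq\lambda-\epsilon_k>\lambda-2\epsilon_k$, i.e. $X_i$ is a vertex of $\G_n(\lambda-2\epsilon_k)$; let $\mathcal{A}_n$ be its connected component there. By condition (iii) the disjoint sets $A_k$ and $B\doteq\braces{x:f(x)\geq\lambda}\setminus A_k$ are $(\epsilon,r)$-separated with exactly the parameters of Lemma \ref{lem:separation} (note $\inf_{x\in A_k\cup B}f(x)=\lambda$ since every point of $B$ has $f\geq\lambda$). That lemma then places $A_k\cap\Xspl$ and $B\cap\Xspl$ in different components of $\G_n(\lambda-2\epsilon_k)$, so $\mathcal{A}_n$, which meets $A_k\cap\Xspl$ at $X_i$, contains no sample from $B$. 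I map $A$ to any leaf of $\braces{\G_n(\lambda)}_{\lambda>0}$ lying in the subtree rooted at the node $\mathcal{A}_n$; such a leaf exists because that subtree is a nonempty finite tree, and it is a leaf of the whole empirical tree.

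For injectivity, take salient modes $A\neq A'$ with chosen ancestors $A_k,A_k'$ and levels $\lambda\leq\lambda'$ (without loss of generality). In the cluster tree $A_k$ and $A_k'$ are either nested or disjoint, and nested is impossible: the larger of the two would then be an ancestor of both leaves $A$ and $A'$, contradicting condition (i). Hence $A_k\cap A_k'=\emptyset$, and since $A_k'\subseteq\braces{f\geq\lambda'}\subseteq\braces{f\geq\lambda}$ we get $A_k'\subseteq B=\braces{f\geq\lambda}\setminus A_k$; so the witness point $X_i'\in A_k'$ constructed for $A'$ lies in $B$ and therefore $X_i'\notin\mathcal{A}_n$. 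Since the pruned subgraphs form a tree, the component $\mathcal{A}_n'$ of $X_i'$ in $\G_n(\lambda'-2\epsilon_k)$ sits (as a vertex set) inside a unique component of $\G_n(\lambda-2\epsilon_k)$, which cannot be $\mathcal{A}_n$ as it contains $X_i'$; hence $\mathcal{A}_n\cap\mathcal{A}_n'=\emptyset$ as vertex sets, the subtrees rooted at $\mathcal{A}_n$ and $\mathcal{A}_n'$ are node-disjoint, and the leaves assigned to $A$ and $A'$ differ.

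The main difficulty is not analytic — the probabilistic content is entirely inherited from Lemmas \ref{cor:knnbound}, \ref{cor:rk_rk_n}, \ref{lem:VC} and \ref{lem:separation}, which hold together with probability at least $1-3\delta$ — but combinatorial bookkeeping. One must work inside the \emph{pruned} tree throughout, which is legitimate only because Lemma \ref{lem:separation} is phrased so that separation persists after pruning; one must check that a leaf of the subtree below $\mathcal{A}_n$ really is a leaf of $\braces{\G_n(\lambda)}_{\lambda>0}$; and one must use the nestedness of tree nodes — both the population nodes $A_k$ and the empirical components $\mathcal{A}_n$ across levels — in the right direction, so as to obtain genuine vertex-disjointness rather than mere distinctness of the components.
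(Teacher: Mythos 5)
Your proof is correct and follows essentially the same route as the paper's: a VC argument to place a sample point in each witnessing ancestor $A_k$, Lemma \ref{lem:separation} applied to the pair $\bigl(A_k,\braces{f\geq\lambda}\setminus A_k\bigr)$ to isolate that point's component in $\G_n(\lambda-2\epsilon_k)$, and a descent to a leaf with injectivity from vertex-disjointness across levels. The only difference is organizational — the paper orders the modes by level and builds the map by induction, whereas you argue pairwise disjointness directly — but the underlying facts used are identical.
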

\begin{proof}
First, with probability at least $1-\delta$, for any $(\epsilon, r)$-salient mode $A$, there are samples
 in $\Xspl$ from the containing set $A_k$ (as defined in Definition \ref{def:modes}). To arrive at this 
we apply Lemma \ref{lem:VC} for the class 
$\C$ of all possible balls $B\in \real^d$,
(for this class $\Sp_\C(2n)\leq (2n)^{d+1}$). We have with probability at least $1-\delta$ 
that for all $B$, $\dist_n(B) > 0$ whenever 
$$\dist(B) \geq \frac{C d\ln(n/\delta)}{n} > 4\frac{(d + 1)\log (2n) + \log (4/\delta)}{n},$$ 
where $C$ is appropriately chosen to satisfy the last inequality. Now, from the definition of $A_k$, 
there exists $x$ such that $B(x, r_k(x))\subset A_k$, while we have 
$\dist(B(x, r_k(x))) = k/n \geq {C d\ln(n/\delta)}/{n}$, implying that 
$\dist_n(A_k)\geq \dist_n(B(x, r_k(x))) \geq 1/n$.

As a consequence of the above argument, there is a finite number $m$ of $(\epsilon, r)$-salient modes since each 
contributes some points to the final sample $\Xspl$.
We can therefore arrange them as $\braces{A^i}_{i=1}^m$ so that for $i < j$, we have $\lambda_i \leq \lambda_j$ where 
$\lambda_i = \inf_{x\in A^i_k}f(x)$.  An injective map can now be constructed iteratively as follows. 

Starting with $i=1$, we have by Lemma \ref{lem:separation} that, 
with probability at least $1-2\delta$, $A^i_k \cap \Xspl$ is disconnected in $\G_n(\lambda_i - 2\epsilon_k)$ 
from all $A^j_k, j > i$. Let $U$ be the union of those CCs of $\G_n(\lambda_i - 2\epsilon_k)$ containing 
points from $A^i_k \cap \Xspl$. We've already established that $U$ contains no point from any $A^j_k, j > i$. 
For $i>1$, $U$ also contains no point from any $A^j_k, j < i$. This is because, again by Lemma \ref{lem:separation}, 
 $A^j_k\cap \Xspl$ is disconnected in $\G_n(\lambda_j - 2\epsilon_k)$ from $A^i_k \cap \Xspl$, therefore disconnected 
from $U$ since all CCs in $U$ remain connected at lower levels. 
Now, since $U$ is disconnected from all $A^j_k, j \neq i$, we can just map $A^i$ to any leaf rooted in $U$, 
$A^i$ being the unique image of such a leaf.
\end{proof}

\subsection{Maintaining Connectedness}
\label{sec:connectedness}
In this section we show that sample points from a connected subset $A$ of $\real^d$ remain connected in the empirical cluster tree before 
pruning (therefore also after pruning).

Similar to \cite{CD101}, for any two points $x, x' \in A\cap \Xspl$ we uncover a path in $G_n$ near 
a path $P$ in $A$ that connects the two. 
The path in $G_n$  (the dashed path depicted below) consists of a sequence $x_1 = x, x_2, \ldots, x_i = x'$ 
of sample points from balls centered on the path $P$ in $A$ (the solid path depicted below). The intuition is that $P$
is a high density route near which we can find enough sample points to connect $x$ and $x'$.
\begin{center}
\resizebox{5.5cm}{!}{\begin{picture}(0,0)%
\includegraphics{connect2.pstex}%
\end{picture}%
\setlength{\unitlength}{4144sp}%
\begingroup\makeatletter\ifx\SetFigFont\undefined%
\gdef\SetFigFont#1#2#3#4#5{%
  \reset@font\fontsize{#1}{#2pt}%
  \fontfamily{#3}\fontseries{#4}\fontshape{#5}%
  \selectfont}%
\fi\endgroup%
\begin{picture}(6105,2102)(1786,-4988)
\put(1801,-4471){\makebox(0,0)[lb]{\smash{{\SetFigFont{30}{24.0}{\familydefault}{\mddefault}{\updefault}{\color[rgb]{0,0,0}$x$}%
}}}}
\put(7876,-3121){\makebox(0,0)[lb]{\smash{{\SetFigFont{30}{24.0}{\familydefault}{\mddefault}{\updefault}{\color[rgb]{0,0,0}$x'$}%
}}}}
\end{picture}%
}
\end{center}

The balls centered on $P$ must be chosen sufficiently small and consecutively close so that consecutive terms $x_i, x_{i+1}$ are adjacent in $G_n$. In \cite{CD101}, points 
are adjacent (at any particular level) whenever they are less than some scale $r$ apart; one can therefore
choose balls of the same radius $o(r)$ and consecutively $o(r)$ close. In our particular case, no single scale determines adjacency.
Adjacency is determined by the various nearest-neighbor radii and this creates a multiscale effect that complicates the analysis. 
One way to handle (and effectively get rid of) this multiscale effect is to choose balls on $P$ of the same radius $r$ corresponding to the 
smallest possible nearest-neighbor radius in $G_n$ (restricted to $A\cap \Xspl$). However, in order to get samples in such small balls one would need 
rather large sample size $n$, so the idea results in weak bounds. We instead use an inductive argument which keeps track of the various scales, the intuition 
being that nearest-neighbor-radii have to change slowly along the path $P$ from $x$ to $x'$. 
%


\begin{lemma}[Connectedness]
\label{lem:connectedness}
 Suppose $f$ satisfies (A.1) and (A.2). Let $G_n$ be the $k$-NN or mutual $k$-NN graph.  Define $\epsilon_k \doteq
11F\sqrt{\ln(2n/\delta)/k}$ and let $\delta>0$. 
There exist $C$ and $C' = C'(\dist)$ such that, for
\begin{align*}
 &C  \paren{\max\braces{1, {\sqrt{2}}/{\theta}}}^d d \ln (n/\delta)\\
& \leq k \leq  C'  \paren{F\sqrt{\ln (n/\delta)}}^{{2(\alpha +
d)}/{(3\alpha +
d)}} n^{{2\alpha}/{(3\alpha + d)}},
\end{align*}
the following holds with probability at least $1- 3\delta$ simultaneously for all connected subsets $A$ of $\real^d$. 

Let $\lambda \doteq \inf_{x\in A} f(x) > 2 \epsilon_k$. All points in $A\cap \Xspl$ belong to the same CC of $G_n(\lambda -
2\epsilon_k)$, therefore of $\G_n(\lambda - 2\epsilon_k)$.
\end{lemma}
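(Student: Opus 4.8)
The plan is to connect any two points $x, x' \in A \cap \Xspl$ by exhibiting an explicit path of sample points in $G_n(\lambda - 2\epsilon_k)$ that ``shadows'' a continuous path $P \subset A$ joining $x$ to $x'$. First I would establish that every vertex we might use is actually present at level $\lambda - 2\epsilon_k$: by Lemma \ref{cor:knnbound}, on the good event, $f_n(X_i) \geq f(X_i) - \epsilon_k$ for all $X_i$, so any sample point lying in $A$ (where $f \geq \lambda$) or close enough to $A$ that $f$ is still at least $\lambda - \epsilon_k$ survives in $G_n(\lambda - 2\epsilon_k)$. This reduces the problem to a purely geometric statement about which sample points near $P$ are adjacent in $G_n$.

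Next I would set up the covering of $P$ by balls. Since $P$ is compact and $f \geq \lambda > 2\epsilon_k$ on it, and using Lemma \ref{cor:rk_rk_n}(b) applied to the level set $\mathcal{L}_{\lambda/2}$ (or $\mathcal{L}_{\epsilon_k}$), the quantities $r_k(y)$ for $y$ on (a neighborhood of) $P$ are controlled from above and below by $\left(2k/(v_d n f(y))\right)^{1/d}$ up to constants, and $r_{k,n}$ is sandwiched by $2^{\pm 3/d} r_k$ on the relevant samples. The VC/shatter-coefficient bound (Lemma \ref{lem:VC}) applied to the class of balls guarantees, with probability $1-\delta$, that every ball of $\dist$-mass at least $\asymp d\ln(n/\delta)/n$ contains a sample point — and the displayed lower bound on $k$ in the statement, $k \gtrsim (\max\{1,\sqrt 2/\theta\})^d d\ln(n/\delta)$, is exactly what is needed to make the relevant small balls (of radius a constant times $r_k$, scaled so that adjacency in $G_n$ holds) have enough mass. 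I would then walk along $P$, placing centers $y_1 = x, y_2, \dots$ consecutively close, at spacing a small constant fraction of the local $r_k$, pick a sample point $x_i$ in each small ball $B(y_i, c\, r_k(y_i))$, and verify $x_i \in B(x_{i+1}, \theta r_{k,n}(x_{i+1}))$ and vice versa so that the edge exists in both the $k$-NN and mutual $k$-NN graph.

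The main obstacle — and the point the authors flag themselves in Section \ref{sec:connectedness} — is the \emph{multiscale} nature of adjacency: unlike in \cite{CD101}, there is no single radius $r$ governing which points are linked, so one cannot just fix one ball size along $P$. The resolution I would pursue is an inductive argument along the path that tracks how $r_k$ (equivalently $f$) varies between consecutive centers: by Hoelder continuity (A.2) and Lemma \ref{cor:rk_rk_n}(a), moving a distance $O(r_k(y))$ changes $f$ by only a constant factor, hence changes $r_k$ by only a constant factor, so consecutive nearest-neighbor radii $r_{k,n}(x_i)$ and $r_{k,n}(x_{i+1})$ are within a fixed multiplicative constant of each other. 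One chooses the covering spacing adaptively as a small constant times the \emph{current} $r_k(y_i)$; the induction hypothesis is that at step $i$ we have a sample point $x_i$ with $r_{k,n}(x_i)$ comparable to $r_k(y_i)$, and one shows the next center can be placed close enough that the ball $B(y_{i+1}, c\,r_k(y_{i+1}))$ both contains a sample point and lies within $\theta r_{k,n}(x_i)$ of $x_i$, closing the induction. Finiteness of the chain follows because $f$ is bounded below on $P$, so the spacings are bounded below, and $P$ has finite length. Finally, since the argument only uses the fixed good events of Lemmas \ref{cor:knnbound}, \ref{cor:rk_rk_n}(b) and \ref{lem:VC} — whose descriptions do not depend on $A$ — the conclusion holds simultaneously over all connected $A$ with probability at least $1-3\delta$, and connectedness in $G_n(\lambda-2\epsilon_k)$ immediately gives connectedness in $\G_n(\lambda-2\epsilon_k)$ since pruning only adds edges.
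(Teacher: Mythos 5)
Your overall plan matches the paper's: shadow a continuous path $P \subset A$ by a discrete chain of sample points, use Lemma \ref{cor:knnbound} to keep those points alive at level $\lambda - 2\epsilon_k$, use Lemma \ref{cor:rk_rk_n} to relate $r_{k,n}$ to $r_k$ and show consecutive nearest-neighbor radii vary slowly, use Lemma \ref{lem:VC} to guarantee sample points in the small balls, and conclude connectivity in $G_n$ (hence $\G_n$). Two details, however, differ in ways that matter.

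First, the termination argument has a real gap. You write that the chain is finite ``because $f$ is bounded below on $P$, so the spacings are bounded below, and $P$ has finite length.'' But the paper's Definition of connectedness only requires a continuous injective map $P:[0,1]\to A$; nothing guarantees that $P$ is rectifiable, so ``$P$ has finite length'' is unavailable. (Also, a lower bound on spacings needs $f$ bounded \emph{above}, not below, since $r_k(y) \asymp (k/(n f(y)))^{1/d}$ — a minor sign confusion, but worth noting.) The paper instead defines $y_i$ as the point of $P$ inside $B(x_i, \tau 2^{-9/d} r_{k,n}(x_i))$ farthest along $P$, then proves $y_i$ lies strictly in the interior of the corresponding ball around $x_{i+1}$; by continuity of $P$ this forces $P^{-1}(y_{i+1}) > P^{-1}(y_i)$, so all $y_i$ — and therefore all $x_i$, since $y_i$ is a deterministic function of $x_i$ — are distinct. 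Since the $x_i$ live in the finite set $\Xspl$, the process terminates after at most $n$ steps, with no rectifiability assumption. You would need this (or an equivalent) monotonicity-plus-finiteness argument to close the proof.

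Second, the paper applies the VC bound to the class of \emph{half-balls} $H(y_i)$ (shatter coefficient $(2n)^{2d+1}$), oriented so that $x_{i+1}$ satisfies $(x_{i+1}-y_i)\cdot(x_i-y_i)\ge 0$. This one-sidedness is what lets them bound $\norm{x_i - x_{i+1}}^2 \le \norm{x_i-y_i}^2 + \norm{x_{i+1}-y_i}^2$ rather than settle for the plain triangle inequality, and it is why the constant $\tau=\min\{1,\theta/\sqrt 2\}$ and the $(\max\{1,\sqrt2/\theta\})^d$ factor in the $k$ lower bound appear. Using full balls as you propose can be made to work by shrinking the ball radii further, but you would then need to re-derive the adjacency inequality with the triangle inequality and adjust constants; as written the proposal glosses over this.
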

\begin{proof}
First, let $C$ and $C'$ be large enough for lemmas \ref{cor:knnbound} and \ref{cor:rk_rk_n} to hold.
Define $r\doteq\frac{1}{2}\paren{\epsilon_k/2L}^{1/\alpha}$. 
By Lemma \ref{cor:rk_rk_n} (a), we have  that
 $f(x)\geq \lambda -\epsilon_k/2$ for any $x\in A_{+r}$. Applying Lemma \ref{cor:knnbound}, it follows that with probability 
at least $1-\delta$ (uniform over choices
of $A$), all points of $A_{+r} \cap \Xspl$ are in $G_n(\lambda - 2\epsilon_k)$. We will show that $A\cap \Xspl$ is connected in
$G_n(\lambda - 2\epsilon_k)$ possibly through points in $A_{+r}\setminus A$. 

In particular, any $x, x' \in A\cap \Xspl$ are connected through a sequence $\braces {x_i}_{i>1}, x_i \in A_{+r}\cap \Xspl$ built according to the following
procedure. Let $P$ be a path in $A$ between $x$ and $x'$. Define $\tau \doteq \min\braces{1, \theta/\sqrt{2}}$.
\begin{quote}
 Starting at $i = 1$ ($x_1 = x$), set $x_{i+1}=x'$ if $\norm{x_i - x'} \leq \theta \min \braces{r_{k,n}(x_i), r_{k,n}(x')}$,  and 
we're done, otherwise:\\
 Let $y_i$ be the point in $P\cap B\paren{x_i, \tau 2^{-9/d} r_{k, n}(x_i)}$ farthest along the path $P$ from $x$, i.e. $P^{-1}(y_i)$ is
highest in the set. 
Define the half-ball
\begin{align*}
 H(y_i) \doteq \{z: \norm{z - y} < \tau 2^{-18/d} r_{k, n}(x_i), \\
(z- y_i)\cdot (x_i - y_i)\geq 0\}.
\end{align*}

Pick $x_{i+1}$ in  $H(y_i)\cap \Xspl$, and continue.
\end{quote}

The rest of the argument will proceed inductively as follows. First, assume that $x_i \in A_{+r}$ and that $y_i$ exists. This is necessarily the 
case for $x_1, y_1$. Assume $x_{i+1}\neq x'$. We will show that $x_{i+1}$ exists, is also in $A_{+r}$, and is adjacent to $x_i$ in $G_n$. 
It will follow that $y_{i+1}$ must exist (if the process does not end) and is distinct from $y_1, \ldots, y_i$. We'll then argue that the process must also end.

To see that $x_{i+1}$ exists (under the aforementioned assumptions), we apply Lemma \ref{lem:VC} for the class 
$\C$ of all possible half-balls $H(y)$ centered at $y\in \real^d$ 
(for this class $\Sp_\C(2n)\leq (2n)^{2d+1}$). We have with probability at least $1-\delta$ that for all $H(y)$, $\dist_n(H(y)) > 0$
whenever 
$$\dist(H(y)) \geq \frac{C_0d\ln(\frac{n}{\delta})}{n} > \frac{(8d + 4)\log (2n) + 4\log (\frac{4}{\delta})}{n},$$ 
where $C_0$ is appropriately chosen to satisfy the last inequality. We next show $\dist(H(y_i))$ satisfies the first inequality.

We first apply Lemma \ref{cor:rk_rk_n} on $\mathcal{L}_{\epsilon_k} \supset A_{+r}$ (this inclusion was established earlier).
We have with probability at least $1-\delta$ (uniform
over all $A$) that for $x_i\in A_{+r}$, $r_{k, n}(x_i) \leq 2^{3/d} r_k(x_i) \leq  r$.  Thus, for all $z\in H(y_i)$, 
\begin{align}
 \norm{z-x_i}&\leq 2\cdot \tau 2^{-9/d} r_{k,n}(x_i)\nonumber \\
&\leq  2\cdot \tau 2^{-9/d} r \leq 2r, \label{eq:z_x_i}
\end{align}
implying by the same Lemma  \ref{cor:rk_rk_n} that $f(z) \geq f(x_i)/2$. Now, from Lemma \ref{cor:knnbound}, 
$f_n(x_i)\leq f(x_i) + \epsilon_k \leq 2f(x_i)$. We can thus write
\begin{align*}
 \dist(H(y_i)) &\geq \frac{1}{4} \vol{B(y_i, \tau2^{-18/d}r_{k,n}(x_i))}f(x_i)\\
&= \tau^d 2^{-20} \vol{B(x_i, r_{k,n}(x_i))}f(x_i)\\
&\geq \tau^d 2^{-21} \vol{B(x_i, r_{k,n}(x_i))} f_n(x_i)\\
&= \tau ^d 2^{-21} \frac{k}{n} \geq \frac{C_0d\ln(n/\delta)}{n}, \text{ for } C\geq 2^{21} C_0.
\end{align*}
Therefore there is a point $x_{i+1}$ in $H(y_i)\cap \Xspl$. In addition $x_{i+1}\in A_{+r}$ since it is within $r$ of 
$y_i\in A$. 

Next we establish that there is an edge between $x_i$ and $x_{i+1}$ in $G_n$. To this end we relate $r_{k, n}(x_{i+1})$ to $r_{k, n}(x_{i})$
by first relating $r_{k}(x_{i+1})$ to $r_{k}(x_{i})$. Remember that for $z\in
A_{+r}$ we have $r_{k}(z) < r$ so that for any $z'\in B(z, r_{k}(z))$ we have $f(z)/2 \leq f(z')\leq 2f(z)$.
Also recall that we always have $\norm{x_i - x_{i+1}} \leq 2r$ (see (\ref{eq:z_x_i})), 
implying $f(x_{i+1}) < 2 f(x_{i})$. We then have  
\begin{align*}
 v_d r_k^d(x_{i})\cdot \frac{1}{2}f(x_i) &\leq \frac{k}{n} \leq v_d r_k^d(x_{i+1})\cdot {2}f(x_{i+1}) \\
&\leq v_d r_k^d(x_{i+1})\cdot {4}f(x_{i}) ,
\end{align*}
where for the first two inequalities we used the fact that both balls $B(x_i, r_k(x_i))$ and $B(x_{i+1}, r_k(x_{i+1}))$ have the same mass $k/n$. It follows that 
\begin{align}
 r_{k, n}(x_{i+1}) &\geq 2^{-3/d} r_k(x_{i+1}) \geq 2^{-6/d} r_k(x_{i})\nonumber\\
& \geq 2^{-9/d} r_{k, n}(x_{i}), \label{eq:min_r_kn}
\end{align}
implying $2^{-9/d} r_{k, n}(x_{i}) \leq \min\braces{r_{k, n}(x_{i}), r_{k, n}(x_{i+1})}$. We then get
\begin{align*}
 \norm{x_i - x_{i+1}}^2 &= \norm{x_i - y_i}^2 + \norm{x_{i+1} - y_i}^2 \\
& - (x_i - y_i)\cdot(x_{i+1} - y_i)\\
&\leq \norm{x_i - y_i}^2 + \norm{x_{i+1} - y_i}^2 \\
&\leq 2\tau^2\cdot \min\braces{r_{k, n}^2(x_{i}), r_{k, n}^2(x_{i+1})} \\
&\leq \theta^2 \min\braces{r_{k, n}^2(x_{i}), r_{k, n}^2(x_{i+1})}, 
\end{align*}
meaning $x_i$ and $x_{i+1}$ are adjacent in $G_n$.

Finally we argue that $y_{i+1}$ must exist. By (\ref{eq:min_r_kn}) above we have
$$\norm{x_{i+1} - y_{i}} < \tau2^{-18/d}r_{k, n}(x_{i}) \leq \tau2^{-9/d}r_{k,n}(x_{i+1}),$$
in other words the ball $B\paren{x_{i+1}, \tau2^{-9/d}r_{k,n}(x_{i+1})}$ contains $y_i\in P$ in its interior. It follows 
by continuity of $P$ that there is a point $y_{i+1}$ in this ball further along the path from $x_i$ than $y_{i}$. Thus, recursively all $y_i$'s must be distinct, implying 
that all $x_i$'s must be distinct. Since all $x_i$'s belong to the finite sample $\Xspl$ the process must eventually terminate.
\end{proof}

\subsubsection{Pruning of Spurious Branches}
As a corollary to Lemma \ref{lem:connectedness} we can guarantee in Lemma \ref{lem:pruning} that the pruning 
procedure will remove 
all spurious branchings, and hence, all spurious clusters. 

\begin{lemma}[Pruning]
\label{lem:pruning}
  Let $\delta>0$. 
Under the assumptions of Lemma \ref{lem:connectedness}, the following holds with probability at least $1- 3\delta$, provided 
$\tilde{\epsilon} \geq 3 \epsilon_k$. 

Consider two disjoint 
CCs  $A_n$ and $A_n'$ at the same level in $\braces{\G_n(\lambda)}_{\lambda>0}$.
Let  $V$ be the union of vertices of $A_n$ and $A_n'$, and define $\lambda \doteq \inf_{x\in V} f(x)$. 
The vertices of $A_n$ and those of $A_n'$ are in separate CCs of $G(\lambda)$.
\end{lemma}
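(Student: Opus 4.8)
\textbf{Proof plan for Lemma \ref{lem:pruning}.} The plan is to argue by contradiction: suppose the vertices of $A_n$ and $A_n'$ actually lie in a common CC $A$ of $G(\lambda)$, and show that then the pruning step would have reconnected $A_n$ and $A_n'$, contradicting that they are disjoint CCs at that level of $\braces{\G_n(\lambda)}_{\lambda>0}$. So let $\mu$ be the level at which $A_n$ and $A_n'$ appear as disjoint CCs, and let $A$ be the smallest CC of the \emph{true} cluster tree $\braces{G(\lambda')}_{\lambda'>0}$ whose interior contains all of $V$; by assumption $A$ is a CC of some $G(\lambda')$ with $\lambda' \le \lambda = \inf_{x\in V} f(x)$, and in particular $\inf_{x\in A} f(x) \ge \lambda'$. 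The key point is to locate $A$ at a level that is not much below $\mu$.

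First I would pin down $\mu$ versus $\lambda$. Since $A_n$ and $A_n'$ are CCs of $\G_n(\mu)$ and their vertices have $f_n$-value at least $\mu$, Lemma \ref{cor:knnbound} gives (on its $1-\delta$ event) that every vertex $X_i \in V$ has $f(X_i) \ge f_n(X_i) - \epsilon_k \ge \mu - \epsilon_k$, hence $\lambda = \inf_{x\in V} f(x) \ge \mu - \epsilon_k$. Conversely there is a vertex with $f$-value close to $\lambda$, whose $f_n$-value is then at least $\lambda - \epsilon_k$, so it survives down to level $\lambda - \epsilon_k$; combined with the fact that both $A_n,A_n'$ are nonempty at level $\mu$ this gives $\mu \le \lambda + \epsilon_k$ up to the usual slack. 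The upshot is $|\mu - \lambda| \le \epsilon_k$. Now apply Lemma \ref{lem:connectedness} to the connected set $A$: its infimum density is $\inf_{x\in A} f(x) \ge \lambda$ (since $V\subset A$ forces $\inf_A f \le \lambda$, but also all of $A$ is a level-$\lambda'$ component — here I must be slightly careful, and the cleanest route is to note $\inf_{x\in A}f(x) = \lambda$ exactly, because $A$ is the smallest true component containing $V$ and $\lambda = \inf_V f$). Lemma \ref{lem:connectedness} then says all of $A\cap\Xspl \supseteq V$ lies in a single CC of $G_n(\lambda - 2\epsilon_k)$, hence of $\G_n(\lambda - 2\epsilon_k)$, provided $\lambda > 2\epsilon_k$ (which holds since $\lambda \ge \mu \ge$ the level is positive and $k$ large).

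Next I would chase the pruning definition. We have $V$ in a single CC of $\G_n(\lambda - 2\epsilon_k)$, and $\lambda - 2\epsilon_k \ge \mu - 3\epsilon_k \ge \mu - \tilde\epsilon$ using the hypothesis $\tilde\epsilon \ge 3\epsilon_k$. So $A_n\cup A_n'$ lies in a single CC of $\G_n(\mu - \tilde\epsilon)$, and since CCs only merge as the level decreases, also in a single CC of $G_n(\mu - \tilde\epsilon)$. But Algorithm \ref{alg:pruning} at level $\mu$ (for $\mu > \tilde\epsilon$, which again holds) reconnects any two CCs of $\G_n(\mu)$ that are part of the same CC of $G_n(\mu - \tilde\epsilon)$ — precisely the situation for $A_n$ and $A_n'$. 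Hence after pruning $A_n$ and $A_n'$ are merged, contradicting that they are distinct CCs of $\G_n(\mu)$. (If instead $\mu \le \tilde\epsilon$, the algorithm connects all of $\G_n(\mu)$, giving the same contradiction immediately.) Therefore no such common true component $A$ exists: the vertices of $A_n$ and those of $A_n'$ lie in separate CCs of $G(\lambda)$.

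The main obstacle is the bookkeeping in the first step: carefully relating the level $\mu$ at which the empirical CCs $A_n, A_n'$ are observed, the quantity $\lambda = \inf_{x\in V} f(x)$ that appears in the statement, and the true level $\lambda'$ at which the hypothetical containing component $A$ lives — and then verifying that the error terms line up so that $\lambda - 2\epsilon_k$ (the level down to which Lemma \ref{lem:connectedness} guarantees connectivity of $A\cap\Xspl$) is at least $\mu - \tilde\epsilon$ (the level down to which the pruning lookup reaches). The choice $\tilde\epsilon \ge 3\epsilon_k$ is exactly what makes $\lambda - 2\epsilon_k \ge \mu - 3\epsilon_k \ge \mu - \tilde\epsilon$ go through once $|\mu - \lambda| \le \epsilon_k$ is established, so the crux is that clean $\epsilon_k$-slack bound between $\mu$ and $\lambda$, which rests entirely on the uniform density-estimate bound of Lemma \ref{cor:knnbound}.
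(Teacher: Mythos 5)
Your contradiction argument mirrors the paper's: assume the vertices $V$ of $A_n\cup A_n'$ lie in a single component $A$ of $G(\lambda)$, invoke Lemma \ref{lem:connectedness} to get $A\cap\Xspl$ connected at level $\lambda - 2\epsilon_k$, observe this level is at or below the level $\mu - \tilde\epsilon$ that the pruning lookup reaches (using $\lambda \ge \mu - \epsilon_k$ from Lemma \ref{cor:knnbound} and $\tilde\epsilon \ge 3\epsilon_k$), and conclude pruning would have merged $A_n$ and $A_n'$, a contradiction. This is the same decomposition the paper uses (the paper works with $\lambda_n = \min_{X_i\in V} f_n(X_i)$ in place of your $\mu$, but that is a cosmetic difference).

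One step in your closing chain is, as written, false. You pass from ``$V$ connected in $\G_n(\mu-\tilde\epsilon)$'' to ``$V$ connected in $G_n(\mu-\tilde\epsilon)$,'' justifying it by ``CCs only merge as the level decreases.'' That phrase is about varying the level, not about removing the pruning edges; since $\G_n(\cdot)$ has strictly \emph{more} edges than $G_n(\cdot)$ at every level, connectivity in $\G_n$ does not transfer to $G_n$. This matters because Algorithm \ref{alg:pruning} at level $\mu$ inspects the \emph{unpruned} graph $G_n(\mu - \tilde\epsilon)$. The repair is already in your hands: Lemma \ref{lem:connectedness} gives connectivity of $A\cap\Xspl$ directly in $G_n(\lambda - 2\epsilon_k)$, and since $\mu - \tilde\epsilon \le \lambda - 2\epsilon_k$ the graph $G_n(\lambda - 2\epsilon_k)$ is a subgraph of $G_n(\mu-\tilde\epsilon)$, so you should descend entirely within the $G_n$ hierarchy and never detour through $\G_n$. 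A second, smaller point: your in-line justification that $\lambda > 2\epsilon_k$ (``$\lambda\ge\mu\ge$ \ldots $k$ large'') is not right as stated, but your parenthetical treatment of the case $\mu \le \tilde\epsilon$ saves it, since when $\mu > \tilde\epsilon \ge 3\epsilon_k$ one has $\lambda \ge \mu - \epsilon_k > 2\epsilon_k$. The paper reaches the same conclusion by the contrapositive: if $\lambda \le 2\epsilon_k$ then $\lambda_n \le \lambda + \epsilon_k \le 3\epsilon_k \le \tilde\epsilon$, forcing $\G_n(\lambda_n)$ to be a single component.
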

\begin{proof}
Let $\lambda_n = \min_{x\in V}f_n(x)$ be the level in the empirical tree containing $A_n, A_n'$. 
By Lemma \ref{cor:knnbound}, $\sup_{x\in \Xspl} \abs{f_n(x) - f(x)} \leq \epsilon_k$ so $\lambda_n \leq \lambda + \epsilon_k$. Thus, we must
have 
$\lambda > 2\epsilon_k$, since otherwise $\lambda_n \leq \tilde{\epsilon}$ implying $\G_n(\lambda_n)$ must have a single connected
component.
 
Now suppose points in $V$ were in the same component $A$ of $G(\lambda)$. 
By Lemma \ref{lem:connectedness}, all of $A\cap\Xspl$ is connected in $G_n(\lambda - 2\epsilon_k)$ and 
at lower levels. By the last argument $\lambda_n - \tilde{\epsilon} \leq \lambda - 2\epsilon_k$ so the pruning 
procedure reconnects $A_n$ and  $A_n'$.
\end{proof}

\section*{Acknowledgements}
We thank Sanjoy Dasgupta 
for interesting discussions which helped improve presentation.

\begin{small}
\setlength{\bibsep}{3pt}
 \bibliography{refs}
\end{small}

\bibliographystyle{icml2011}

\appendix
\section*{Appendix}
\section{Proof of Lemma \ref{cor:knnbound}}
Lemma \ref{cor:knnbound} follows as a corollary to Lemma \ref{lem:knnbound} below.

We'll often make use of the following form of the Chernoff bound.
\begin{lemma}[\cite{AV102}]
 Let $N\sim \text{Bin}(n, p)$. Then for all $0<t\leq 1$,
\begin{align*}
  \pr{N > (1+ t) np} \leq \exp\paren{-t^2 n p /3}, \\
\pr{N < (1- t) np} \leq \exp\paren{-t^2 n p /3}.
\end{align*}
\label{lem:chernoff}
\end{lemma}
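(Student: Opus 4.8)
The plan is to use the standard exponential-moment (Chernoff) method and reduce everything to two elementary one-variable inequalities. Write $N = \sum_{i=1}^n X_i$ with the $X_i$ i.i.d.\ $\mathrm{Bernoulli}(p)$. For the upper tail, fix $s>0$ and apply Markov's inequality to the nonnegative random variable $e^{sN}$; using independence and $1+x\le e^x$,
\begin{align*}
\pr{N > (1+t)np} &\le e^{-s(1+t)np}\,\ex{e^{sN}} = e^{-s(1+t)np}\paren{1 + p(e^s-1)}^n\\
&\le \exp\paren{np\paren{e^s - 1 - s(1+t)}}.
\end{align*}
Minimizing the exponent over $s>0$ yields the choice $s=\ln(1+t)$ and the bound $\exp\paren{np\paren{t - (1+t)\ln(1+t)}}$.

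For the lower tail I would run the symmetric argument with $e^{-sN}$, $s>0$: exactly the same two steps give $\pr{N < (1-t)np} \le \exp\paren{np\paren{e^{-s} - 1 + s(1-t)}}$, and minimizing over $s>0$ gives $s=-\ln(1-t)>0$ (legitimate since $0<t<1$; the case $t=1$ is trivial because $N\ge 0$) and the bound $\exp\paren{np\paren{-t - (1-t)\ln(1-t)}}$.

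It then remains to verify the two elementary inequalities that bring these into the stated form: for $0<t\le 1$,
\begin{align*}
(1+t)\ln(1+t) - t \ge \tfrac{t^2}{3}\qquad\text{and}\qquad -t - (1-t)\ln(1-t) \le -\tfrac{t^2}{3}.
\end{align*}
The second is easy: with $h(t) \doteq \tfrac{t^2}{3} - t - (1-t)\ln(1-t)$ one has $h(0)=0$, $h'(t) = \tfrac{2t}{3} + \ln(1-t)$ with $h'(0)=0$, and $h''(t) = \tfrac23 - \tfrac{1}{1-t} < 0$ on $(0,1)$, so $h'\le 0$ and hence $h\le 0$. For the first, with $g(t)\doteq (1+t)\ln(1+t) - t - \tfrac{t^2}{3}$ one has $g(0)=0$, $g'(t)=\ln(1+t) - \tfrac{2t}{3}$ with $g'(0)=0$, and $g''(t)=\tfrac{1}{1+t}-\tfrac23$, which is positive for $t<\tfrac12$ and negative for $t>\tfrac12$. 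The main (and essentially only) obstacle is precisely here: unlike the lower-tail case, one cannot conclude $g'\ge0$ from a monotone-derivative argument, since $g''$ changes sign on $[0,1]$. The fix is to note that $g'$ is unimodal on $[0,1]$ (increasing then decreasing) with $g'(0)=0$ and $g'(1)=\ln 2 - \tfrac23 > 0$, which forces $g'\ge 0$ throughout $[0,1]$, whence $g\ge0$. Everything else is routine calculus, and combining the four displays proves the claim.
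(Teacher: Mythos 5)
The paper does not prove this lemma; it simply cites it from \cite{AV102} and uses it as a black box, so there is no in-paper proof to compare against. Your derivation is the standard Chernoff exponential-moment argument, carried through carefully and correctly: the optimized exponents $np\paren{t-(1+t)\ln(1+t)}$ and $np\paren{-t-(1-t)\ln(1-t)}$ are right, and the two calculus lemmas you reduce to are both true for $0<t\le 1$. Your handling of the one genuine subtlety is also correct: for the upper tail, $g''(t)=\frac{1}{1+t}-\frac23$ changes sign at $t=\frac12$, so the usual ``second derivative has a sign, hence monotone first derivative'' shortcut fails, and your unimodality argument ($g'$ rises from $g'(0)=0$ to a maximum at $t=\frac12$ and then falls to $g'(1)=\ln 2-\frac23>0$, hence $g'\ge0$ throughout) is exactly what is needed; note this hinges on the numerically tight fact $\ln 2>\tfrac23$, which is what makes the constant $3$ in the exponent essentially best possible via this route. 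Your treatment of $t=1$ in the lower tail ($N\ge 0$ makes the event empty) is also correct. In short, the proof is sound and complete; it supplies the standard argument that the paper omits by citation.
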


\begin{lemma}
Suppose the density function $f$ satisfies:
\begin{enumerate}[(a)]
 \item $f$ is uniformly continuous on $\real^d$. In other words, 
$ \forall \epsilon > 0, \exists c_\epsilon$ s.t. for all balls $B$ where
$\vol{B}\leq c_\epsilon$ we have $ \sup_{x, x' \in B}\abs{f(x) - f(x')} < \epsilon/2$.
\item $\exists F$, $\sup_{x\in \real^d}f(x) = F$.
\end{enumerate}

 Fix $0<\epsilon < F$, let $n \geq 2$, and $k < n$. If ${k}/{n\epsilon} \leq {c_\epsilon}/{4}$ then 
$$\pr{\sup_{X_i \in \Xspl} \abs{f(X_i) - f_n (X_i)}> \epsilon}\leq 2n
\exp\paren{- \frac{\epsilon^2 k}{120 F^2}}.$$
\label{lem:knnbound}
\end{lemma}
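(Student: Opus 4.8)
The plan is to take a union bound over the $n$ sample points, reducing to a per-point estimate: for each $i$, conditionally on $X_i = x$, bound both $\pr{f_n(x) > f(x)+\epsilon}$ and $\pr{f_n(x) < f(x)-\epsilon}$ by $\exp(-\epsilon^2 k/(120 F^2))$. Conditionally on $X_i = x$ the remaining $n-1$ points are i.i.d.\ from $\dist$, so for any radius $\rho$ fixed in terms of $x$ the number of them lying in $B(x,\rho)$ is $\mathrm{Bin}(n-1,\dist(B(x,\rho)))$. The bridge is the identity $f_n(x) = k/(n v_d r_{k,n}^d(x))$: the event $f_n(x) > \lambda$ is exactly $r_{k,n}(x) < (k/(n v_d \lambda))^{1/d}$, that is, at least $k$ of the other samples fall in the ball about $x$ of volume $k/(n\lambda)$; and $f_n(x) < \lambda$ says fewer than $k$ of them fall in that ball. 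The role of the hypothesis $k/(n\epsilon) \le c_\epsilon/4$ is to make these reference balls small enough (volume $\le c_\epsilon$) that uniform continuity controls the oscillation of $f$ on them.

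For the upper tail, set $\lambda = f(x)+\epsilon$ and let $B$ be the ball about $x$ of volume $k/(n\lambda) \le k/(n\epsilon) \le c_\epsilon$. Uniform continuity gives $f \le f(x)+\epsilon/2$ on $B$, hence $\dist(B) \le \beta k/n$ with $\beta = (f(x)+\epsilon/2)/(f(x)+\epsilon)$, and $\epsilon < F$ forces $1/2 \le \beta \le 1-\epsilon/(4F)$. Since $\mathrm{Bin}(n-1,\dist(B)) \preceq \mathrm{Bin}(n,\beta k/n)$ and $k = (1+t)\beta k$ with $t = (1-\beta)/\beta \in (0,1]$, the upper Chernoff bound of Lemma~\ref{lem:chernoff} gives $\pr{f_n(x)>f(x)+\epsilon} \le \exp(-t^2\beta k/3) = \exp(-(1-\beta)^2 k/(3\beta)) \le \exp(-\epsilon^2 k/(48 F^2))$. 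The point worth noting is that inflating $\dist(B)$ up to $\beta k/n$ before applying Chernoff keeps the binomial mean of order $k$ even when $f(x)$ is tiny (since $\beta \ge 1/2$), so no lower bound on $f(x)$ is needed here.

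For the lower tail we may assume $f(x) > \epsilon$ and set $\lambda' = f(x)-\epsilon > 0$. Take the ball $B'$ about $x$ of volume $v_0 = \min\{k/(n\lambda'),\, c_\epsilon\}$; then $B'$ is contained in the ball of radius $(k/(n v_d \lambda'))^{1/d}$, so the event forces fewer than $k$ other samples into $B'$, and $v_0 \le c_\epsilon$ gives $f \ge f(x)-\epsilon/2$ on $B'$, so $\dist(B') \ge (f(x)-\epsilon/2)\,v_0$. When $v_0 = c_\epsilon$ one gets $\dist(B') \ge (\epsilon/2)c_\epsilon \ge 2k/n$ (using $c_\epsilon \ge 4k/(n\epsilon)$), so the binomial mean comfortably exceeds $k$ and the lower Chernoff bound closes the case. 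The hard case is $v_0 = k/(n\lambda')$: then $\dist(B') \ge \gamma k/n$ with $\gamma = (f(x)-\epsilon/2)/(f(x)-\epsilon)$, which is only slightly above $1$ when $f(x)\approx F$, so the mean $(n-1)\gamma k/n$ barely beats $k$ and Chernoff has almost no slack. This is the main obstacle, and the remedy is to extract from the hypotheses that $n$ is large relative to $F/\epsilon$: picking $x_0$ with $f(x_0)$ within $\epsilon/4$ of $F=\sup f$, oscillation below $\epsilon/2$ on the volume-$c_\epsilon$ ball about $x_0$ forces $1 = \int f > (F - 3\epsilon/4)\,c_\epsilon$, hence $c_\epsilon < 4/F$, and then $k/(n\epsilon) \le c_\epsilon/4 < 1/F$ gives $n > kF/\epsilon$. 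With this, $(n-1)\gamma k/n \ge k(1 + c\,\epsilon/F)$ for an absolute constant $c>0$ (the small-$k$ range, where $2n\exp(-\epsilon^2 k/(120 F^2)) \ge 1$ and the claim is vacuous, is handled separately), and the lower Chernoff bound yields $\pr{f_n(x) < f(x)-\epsilon} \le \exp(-\epsilon^2 k/(120 F^2))$. Summing the two tail bounds and union-bounding over $i$ gives the factor $2n$ and completes the proof.
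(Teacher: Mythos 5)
Your proof is correct, and the skeleton (union bound over $i$, conditioning on $X_i=x$, translating the deviation of $f_n$ into a binomial tail event for a deterministic reference ball centered at $x$, using $k/(n\epsilon)\leq c_\epsilon/4$ to keep that ball's volume below $c_\epsilon$ so uniform continuity controls its mass, then Chernoff) is exactly the paper's. The upper tail matches the paper essentially step for step. The lower tail is where you genuinely diverge: the paper's trick is to take the reference ball of volume $k/(n(f(x)-3\epsilon/4))$, which kills both of your case distinctions at once --- since $f(x)>\epsilon$ its volume is automatically below $4k/(n\epsilon)\leq c_\epsilon$, and its mass is at least $\frac{f(x)-\epsilon/2}{f(x)-3\epsilon/4}\cdot\frac{k}{n}\geq(1+\epsilon/(4F))\frac{k}{n}$, so a single application of the multiplicative Chernoff bound with $t=\epsilon/(4F+\epsilon)$ finishes. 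You instead cap the volume at $c_\epsilon$ and split into cases, and in the case $v_0=k/(n\lambda')$ you perceive an obstacle that is not actually there: you compare the count $k-1$ to the mean $(n-1)\gamma k/n$ additively and worry the margin is thin, but the relevant quantity for Chernoff is the relative deviation, and since $k<n$ gives $(k-1)/(n-1)<k/n\leq\dist(B')/\gamma$, you get $t\geq 1-1/\gamma\geq\epsilon/(3F)$ directly, with $(n-1)\dist(B')\geq k/2$ in the exponent --- no lower bound on $n$ in terms of $F/\epsilon$ is needed. Your detour (extracting $c_\epsilon<4/F$ and hence $n>kF/\epsilon$ from the hypotheses, and dismissing small $k$ as vacuous) is valid but superfluous; the paper never uses it. What your route buys is nothing over the paper's here, though the observation that the hypotheses already force $n>kF/\epsilon$ is a correct and mildly interesting aside.
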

\begin{proof}
We'll be using the short-hand notation $B_{k,n}(x)\doteq B(x, r_{k,n}(x))$ for readability in what follows. 

We start with the simple bound:
\begin{align}
 &\pr{\sup_{X_i \in \Xspl} \abs{f(X_i) - f_n (X_i)}> \epsilon} \nonumber \\
&\leq \pr{\exists X_i \in \Xspl, \, f_n(X_i) > f(X_i) + \epsilon} +\nonumber\\ 
\,& \pr{\exists X_i \in \Xspl, \, f_n(X_i) < f(X_i) - \epsilon}\nonumber\\
&= \pr{\exists X_i \in \Xspl, \, \vol{B_{k,n}(X_i)} < \frac{k}{n(f(X_i) + \epsilon)}} +\label{eq:I}\\
&\pr{\exists X_i \in \Xspl, f(X_i) > \epsilon,  \, \vol{B_{k,n}(X_i)} > \frac{k}{n(f(X_i) - \epsilon)}}\label{eq:II}
\end{align}

We handle (\ref{eq:I}) and (\ref{eq:II}) by first fixing $i$ and conditioning on $X_i = x$. We start with (\ref{eq:I}):
\begin{align}
 &\pr{\exists X_i \in \Xspl, \, \vol{B_{k,n}(X_i)} < \frac{k}{n(f(X_i) + \epsilon)}} \nonumber \\
&\leq n \int_{x} \pr{\vol{B_{k,n}(x)} < \frac{k}{n(f(x) + \epsilon)}}\, d\dist(x), \label{eq:int} 
\end{align}
where the inner probability is over the choice of $\Xspl\setminus\braces{X_i = x}$ for $i$ fixed. In what follows we use the
notation $\dist_{n-1}$ to denote the empirical distribution over $\Xspl\setminus\braces{X_i = x}$.

Assume $\vol{B_{k,n}(x)} < k/n(f(x) + \epsilon) < k/ n \epsilon < c_\epsilon$. Then by the uniform continuity assumption on $f$ we have
\begin{align*}
 &\dist\paren{B_{k,n}(x)} < \paren{f(x) + \epsilon/2}\frac{k}{n(f(x) + \epsilon)} \\
&= \paren{1 - \frac{\epsilon}{2(f(x) +\epsilon)}}\frac{k}{n}
\leq \paren{1 - \frac{\epsilon}{4F}}\frac{k}{n}
\end{align*}
Now let $B(x)$ be the ball centered at $x$ with $\dist$-mass $\paren{1 - {\epsilon}/{4F}}({k}/{n})$. Since by the
above, $\dist\paren{B_{k,n}(x)} < \dist\paren{B(x)}$, we also have that $\dist_n\paren{B_{k,n}(x)} < \dist_n\paren{B(x)}$. This implies
that 
\begin{align*}
 \dist\paren{B(x)} &< \paren{1-\frac{\epsilon}{4F}}\frac{k}{n-1} =  \paren{1-\frac{\epsilon}{4F}} \dist_n \paren{B_{k,n}(x)} \\
&\leq  \paren{1-\frac{\epsilon}{4F}} \dist_n \paren{B(x)}.
\end{align*}

In other words, let $t = \epsilon/(4F-\epsilon)$, applying the Chernoff bound of Lemma \ref{lem:chernoff}, we have   
\begin{align*}
 &\pr{\vol{B_{k,n}(x)} < k/n(f(x) + \epsilon)} \\
&\leq \pr{\dist_n \paren{B(x)} > (1+t) \dist\paren{B(x)}}\\
&\leq \exp\paren{-t^2 (n-1) \dist\paren{B(x)}/3}\leq \exp\paren{-\epsilon^2 k / 96 F^2}.
\end{align*}
Combine with (\ref{eq:int}) to complete the bound on (\ref{eq:I}).

We now turn to bounding (\ref{eq:II}). We proceed as before by fixing $i$ and integrating over $X_i = x$ where $f(x) > \epsilon$, that is 
\begin{align}
 &\pr{\exists X_i \in \Xspl, f(X_i) > \epsilon,  \, \vol{B_{k,n}(X_i)} > \frac{k}{n(f(X_i) - \epsilon)}} \nonumber\\
&\leq n \displaystyle\int_{x, f(x) > \epsilon} \pr{\vol{B_{k,n}(x)} > \frac{k}{n(f(x) - \epsilon)}}\, d \dist(x), \label{eq:int2}
\end{align}
where again the probability is over the choice of $\Xspl \setminus \braces{X_i=x}$.
Now, we can no longer infer how much $f$ deviates within $B_{k,n}(x)$ from just the event in question (as we did for the other direction).
The trick (inspired by \cite{DW104}) is to consider a related ball. 

Let $B(x)$ be the ball centered at $x$ of volume $k/n(f(x) - 3\epsilon/4)$. Then 
\begin{align*}
 \vol{B_{k,n}(x)} > \frac{k}{n(f(x) - \epsilon)} > \vol{B(x)}\\
\implies \dist_{n-1}\paren{B(x)} \leq \frac{k-1}{n-1} < \frac{k}{n}.
\end{align*}
Since $\vol{B(x)} < {4k}/{\epsilon} < c_\epsilon$, we have by the uniform continuity of $f$ that 
\begin{align*}
\dist\paren{B(x)} &>  \frac{k (f(x) - \epsilon/2)}{n (f(x) - 3\epsilon/4)} >\paren{1+ \frac{\epsilon}{4F}} \frac{k}{n} \\
&> \paren{1+ \frac{\epsilon}{4F}}\dist_{n-1}\paren{B(x)}.
\end{align*}
we thus have for $t = \epsilon/(4F + \epsilon)$, and using Lemma \ref{lem:chernoff} that 
\begin{align*}
&\pr{\vol{B_{k,n}(x)} > \frac{k}{n(f(x) - \epsilon)}} \\
&\leq \pr{\dist_n\paren{B(x)}\leq (1-t) \dist\paren{B(x)}}\\
&\leq \exp\paren{-t^2 (n-1)\dist\paren{B(x)}} \\
&\leq \exp\paren{-\epsilon^2 k/120F^2 }.
\end{align*}
Combine with (\ref{eq:int2}) to complete the bound on (\ref{eq:II}).

The final result is proved by then combining the bounds on (\ref{eq:I}) and (\ref{eq:II}). 
\end{proof}

\begin{proof}[Proof of Lemma \ref{cor:knnbound}]
For any $0<\epsilon< 1$, let $c_\epsilon = v_d 2^{-d}\paren{{\epsilon} /{2L}}^{d/\alpha}$ so that whenever for balls $B$,
$\vol{B} < c_\epsilon$, the radius $r$ of $B$ is less than $\frac{1}{2}\paren{{\epsilon}/{2L}}^{1/\alpha}$.  Thus, 
$\sup_{x, x' \in B}\abs{f(x) - f(x')}\leq L(2r)^\alpha < \epsilon/2$. Now, 
for the settings of $\epsilon$ and $k$ in the lemma statement, we have
\begin{align*}
 0<\epsilon< F \text{ and }  \frac{4k}{n\epsilon} < v_d 2^{-d}\paren{\frac{\epsilon} {2L}}^{d/\alpha} = c_\epsilon,
\end{align*}
so we can apply Lemma \ref{lem:knnbound} to get
\begin{align*}
 \pr{\sup_{X_i \in \Xspl} \abs{f(X_i) - f_n (X_i)}> \epsilon}&\leq 2n\exp\paren{- \frac{\epsilon^2 k}{120 F^2}} \\
&< \delta.
\end{align*}
\end{proof}

\section{Proof of Lemma \ref{cor:rk_rk_n}}
Lemma \ref{cor:rk_rk_n} follows as a corollary to Lemma \ref{lem:rk_rk_n} below.

\begin{lemma}
\label{lem:rk_rk_n}
Consider a subset $A$ of $\real^d$ such that  there exists $r$, satisfying
$$\forall x \in A, \, \norm{x-x'} < 2r \implies \frac{1}{2}f(x) \leq f(x') \leq 2f(x).$$
Assume $X_i \in \Xspl\cap A$. We have 
\begin{align*}
&\pr{ r_{k, n}(X_i) \geq 2^{3/d} r_k(X_i) \,|\, r_k(X_i) < 2^{-3/d}r} \\
&\leq \exp\paren{-k/12},\\
&\pr{ r_{k, n}(X_i) \leq 2^{-3/d} r_k(X_i) \, | \,r_k(X_i) < 2^{-3/d}r}\\
&\leq \exp\paren{-k/192}.
\end{align*}
\end{lemma}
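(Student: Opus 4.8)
The statement to prove is Lemma~\ref{lem:rk_rk_n}: for $X_i \in \Xspl \cap A$ with $A$ satisfying the local density-doubling property at scale $2r$, the empirical $k$-NN radius $r_{k,n}(X_i)$ is within a factor $2^{\pm 3/d}$ of the population radius $r_k(X_i)$, conditioned on $r_k(X_i) < 2^{-3/d}r$, with failure probability exponentially small in $k$. The plan is to translate each radius event into a statement about the $\dist$-mass of a suitable ball centered at $X_i$, and then apply the Chernoff bound of Lemma~\ref{lem:chernoff} to the binomial count of the remaining $n-1$ sample points falling in that ball. Throughout, I would condition on $X_i = x$ and on the event $r_k(x) < 2^{-3/d}r$, so that any ball of radius at most $2^{3/d} r_k(x) < r$ lies in the doubling region and hence has density within a factor $2$ of $f(x)$; this is what lets me pass between volume and mass.

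First I would handle the upper tail $\pr{r_{k,n}(x) \geq 2^{3/d} r_k(x)}$. The event $r_{k,n}(x) \geq \rho$ is equivalent to ``$B(x,\rho)$ contains at most $k-1$ points of $\Xspl \setminus \{x\}$'' (roughly; being careful about whether the radius is achieved). Take $\rho = 2^{3/d} r_k(x)$; then $\vol{B(x,\rho)} = 2^3 \vol{B(x, r_k(x))}$, and since $B(x,\rho)$ lies in the doubling region, $\dist(B(x,\rho)) \geq \tfrac12 \cdot 2^3 \cdot \dist(B(x, r_k(x))) = 4k/n$. So the count $N \sim \mathrm{Bin}(n-1, p)$ with $p = \dist(B(x,\rho)) \geq 4k/n$ has mean $(n-1)p \gtrsim 4k(1-1/n) \geq 2k$ for $n \geq 2$, and I want $N \leq k-1$, i.e. a downward deviation by roughly a factor $1/2$ from the mean. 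Applying the lower-tail Chernoff bound with $t$ close to $1/2$ gives a bound of order $\exp(-t^2 (n-1)p/3) \leq \exp(-k/12)$ after plugging in the constants. The $2^{-3/d}r$ threshold on $r_k(x)$ is exactly what guarantees $\rho < r$ so the doubling applies.

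Next, symmetrically, the lower tail $\pr{r_{k,n}(x) \leq 2^{-3/d} r_k(x)}$: the event $r_{k,n}(x) \leq \rho$ means $B(x,\rho)$ contains at least $k$ points of $\Xspl \setminus \{x\}$. Take $\rho = 2^{-3/d} r_k(x)$, so $\vol{B(x,\rho)} = 2^{-3}\vol{B(x,r_k(x))}$ and $\dist(B(x,\rho)) \leq 2 \cdot 2^{-3} \cdot k/n = k/(4n)$ by doubling in the other direction. Then $N \sim \mathrm{Bin}(n-1, p)$ with $p \leq k/(4n)$, so $\ex{N} \leq k/4$, and I want $N \geq k$, an upward deviation by a factor $4$ — but the Chernoff bound of Lemma~\ref{lem:chernoff} as stated only covers $t \leq 1$, so I would instead bound this by the probability that $N \geq (1+1) \cdot (k/4)$ after noting $k \geq 2 \cdot (k/4)$, or more simply apply the upper tail at $t=1$ against the inflated mean $k/4$ and absorb the gap; the cleanest route is to write $\pr{N \geq k} \leq \pr{N \geq 2\cdot \tfrac{k}{4} \cdot 2}$ — I would just pick $t$ so that $(1+t)(n-1)p \leq k$ forces $t$ to be order constant, yielding $\exp(-t^2(n-1)p/3)$ of order $\exp(-k/192)$ with the stated constant. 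Finally, Lemma~\ref{cor:rk_rk_n} follows as the stated corollary by choosing $r = \tfrac12(\lambda/2L)^{1/\alpha}$ so that the Hölder bound (A.2) gives the doubling property on $\mathcal{L}_\lambda$ (this is part~(a) there), checking that the assumed range of $k$ forces $r_k(x) < 2^{-3/d}r$ deterministically on $\mathcal{L}_\lambda$, and taking a union bound over $X_i \in \Xspl$ with $k \geq 192 \ln(2n/\delta)$ to kill the $n$ factor.

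The main obstacle is not conceptual but bookkeeping: getting the Chernoff constants to land exactly at $1/12$ and $1/192$ requires tracking the slack in ``$\dist$-mass $\geq 4k/n$'' versus ``count $\geq k$'' through the $(n-1)$ versus $n$ discrepancy and the choice of deviation parameter $t$, and — in the lower-tail case — working around the restriction $t \leq 1$ in the cited Chernoff bound, which forces a slightly indirect comparison (comparing $N \geq k$ to a deviation of a doubled reference mean rather than a single application). Care is also needed at the boundary of the radius events (closed balls, whether $r_{k,n}$ is attained), but the definition of $r_{k,n}$ as the radius of the \emph{smallest} ball containing $k$ points makes the equivalences clean.
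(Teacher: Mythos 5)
Your overall strategy --- condition on $X_i = x$ with $r_k(x) < 2^{-3/d}r$, convert each radius event into a statement about the $\dist$-mass of a dilated or shrunken ball via the doubling property, and apply the Chernoff bound of Lemma \ref{lem:chernoff} to the binomial count over the remaining $n-1$ points --- is exactly the paper's. The upper-tail argument goes through essentially as you describe, modulo a factor-of-two slip: since $\dist(B(x,r_k(x)))$ may be as large as $2\vol{B(x,r_k(x))}f(x)$, the doubling argument only gives $\dist(B(x,2^{3/d}r_k(x))) \geq 2k/n$, not $4k/n$; with $t=1/2$ this still yields $\exp(-k/12)$.

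The genuine gap is in the lower tail. You correctly flag that the restriction $t\le 1$ in Lemma \ref{lem:chernoff} is an obstacle, but your proposed workaround never lands, and more importantly you only derive an \emph{upper} bound on $p \doteq \dist(B(x,2^{-3/d}r_k(x)))$. An upper bound alone cannot give the result: the Chernoff exponent is $t^2(n-1)p/3$, so if $p$ were much smaller than $k/n$ the bound $\exp(-t^2(n-1)p/3)$ would be close to $1$ and hence useless. The paper's resolution is a \emph{two-sided} mass estimate, $k/(32n) \leq \dist(B(x,2^{-3/d}r_k(x))) \leq k/(2n)$. The upper bound shows that the event $N\geq k$ forces $\dist_{n-1}(B) \geq 2\dist(B)$, i.e.\ a deviation with $t=1$, exactly at the edge of the admissible range; the lower bound then makes the resulting exponent $(n-1)p/3 \geq k/192$. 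To complete your argument you must supply that lower bound (it follows from the same doubling applied in the other direction: $\dist(B(x,2^{-3/d}r_k(x))) \geq \tfrac{1}{8}\vol{B(x,r_k(x))}\tfrac{f(x)}{2} \geq \tfrac{1}{32}\dist(B(x,r_k(x))) = k/(32n)$), and note that your claimed $p\leq k/(4n)$ should be $p\leq k/(2n)$ for the same factor-of-two reason as above (either suffices to justify $t=1$). Your sketch of how Lemma \ref{cor:rk_rk_n} then follows matches the paper.
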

\begin{proof}
Let $X_i \in \Xspl$, and fix $X_i = x\in A$ such that $r_k(x) < 2^{-3/d}r$. We automatically have
\begin{align*}
 \frac{1}{2}\vol{B(x, r_k(x))}f(x) &\leq \dist\paren{B(x, r_k(x))}\\
& \leq {2}\vol{B(x, r_k(x))}f(x).
\end{align*}
We similarly have 
\begin{align*}
 \dist\paren{B(x, 2^{3/d} r_k(x))} &\geq \vol{B(x, 2^{3/d} r_k(x))} \frac{f(x)}{2} \\
&\geq 8 \vol{B(x, r_k(x))} \frac{f(x)}{2} \\
&\geq 2\dist\paren{B(x, r_k(x))} = 2\frac{k}{n}.
\end{align*}
Again, similarly
\begin{align*}
 \frac{k}{32n}&= \frac{1}{32}\dist\paren{B(x, r_k(x))} \leq \dist\paren{B(x, 2^{-3/d} r_k(x))} \\
&\leq \frac{1}{2}\dist\paren{B(x, r_k(x))} =
\frac{k}{2n}.
\end{align*}
Thus by Lemma \ref{lem:chernoff}, 
\begin{align*}
&\pr{r_{k,n}(x) > 2^{3/d} r_k (x)} \leq \\
& \pr {\dist_{n-1} \paren{B(x, 2^{3/d} r_k (x))} < \frac{k}{n}
 \leq \frac{1}{2}\dist\paren{B(x, 2^{3/d}r_k(x))}}\\
&\leq  \exp\paren{-(n-1) \dist\paren{B(x, 2^{3/d}
r_k(x))}/12} \leq \exp\paren{-k/12}, 
\end{align*}
and 
\begin{align*}
 &\pr{r_{k,n}(x) < 2^{-3/d} r_k (x)} \leq \\
& \pr {\dist_{n-1} \paren{B(x, 2^{-3/d} r_k (x))} > \frac{k}{n} \geq 2\dist\paren{B(x, 2^{-3/d}
r_k(x))}}\\
&\leq  \exp\paren{-(n-1) \dist\paren{B(x, 2^{-3/d}
r_k(x))}/3} \leq \exp\paren{-k/192}.
\end{align*}
Conclude by integrating these probabilities over possible values of $X_i = x\in A$.
\end{proof}

\begin{proof}[Proof of Lemma \ref{cor:rk_rk_n}]
Part (a) follows directly from the Holder assumption on $f$. For part (b), notice that 
\begin{align*}
\sup_{x\in \mathcal{L}_\lambda} v_d r_k^d(x) \lambda \leq \inf_{x\in \mathcal{L}_\lambda}\dist\paren{B(x, r_k(x))} 
= \frac{k}{n}
\end{align*}
so that $\sup_{x\in \mathcal{L}_\lambda} r_k(x) \leq 2^{-3/d}r$ for the setting of $k$. 
Now using part (a) again we have for all $x\in  \mathcal{L}_\lambda$ 
\begin{align*}
 v_d r_k^d(x) \cdot\frac{f(x)}{2} \leq \dist\paren{B(x, r_k(x))} = \frac{k}{n},
\end{align*}
so $r_k(x) \leq (2k/v_d n f(x))^{1/d}$.

Finally, the probabilistic statement is obtained by applying Lemma \ref{lem:rk_rk_n} and a union-bound over $\Xspl\cap\mathcal{L}_\lambda$.
\end{proof}

\end{document}